\documentclass[journal]{IEEEtran}
\usepackage{amsmath,amsfonts}
\usepackage{algorithm}
\usepackage{array}
\usepackage[caption=false,font=normalsize,labelfont=sf,textfont=sf]{subfig}
\usepackage{textcomp}
\usepackage{stfloats}
\usepackage{url}
\usepackage{verbatim}
\usepackage{graphicx}
\usepackage{bm}
\usepackage{cite}
\usepackage{amsthm}

\newtheorem{theorem}{Theorem}
\usepackage{booktabs}
\usepackage{tabularx}
\usepackage{algpseudocode}
\usepackage{longtable}
\usepackage{placeins}
\usepackage[table]{xcolor}
\usepackage{multirow}
\usepackage{hyperref}
\hypersetup{
    hidelinks
}
\newtheorem{proposition}{Proposition}

\newtheorem{lemma}{Lemma}

\definecolor{lightblue}{RGB}{220, 235, 252}
\definecolor{lightyellow}{RGB}{255, 253, 220}
\definecolor{lightgreen}{RGB}{244, 255, 242}

\def\x{\mathbf{x}}
\def\z{\mathbf{z}}
\def\y{\mathbf{y}}

\def\w{\mathbf{w}}
\def\a{\mathbf{a}}
\def\k{\mathbf{k}}

\def\W{\mathbf{W}}

\newcommand{\T}{\mathbb{T}}
\newcommand{\Z}{\mathbb{Z}}
\newcommand{\R}{\mathbb{R}}
\newcommand{\norm}[1]{\left\| #1 \right\|}

\usepackage{tikz}
\usetikzlibrary{arrows.meta,calc,positioning,backgrounds,decorations.pathreplacing}

\begin{document}

\title{
Dynamics as Code: On Model Compression via Dynamic System
}

\author{
Fan Gao, Wei Su*, Juntong Fan*, Renfeng Peng, Hongyu Liu, Jinqiao Duan, Feng-Lei Fan
\thanks{
Fan Gao and Wei Su are with the School of Information Science and Engineering, Lanzhou University, Lanzhou, Gansu Province, China;
Feng-Lei Fan, Renfeng Peng, and Juntong Fan are with the Department of Data Science, City University of Hong Kong, Hong Kong, China SAR; Hongyu Liu is with the Department of Mathematics, City University of Hong Kong, Hong Kong, China SAR; Jinqiao Duan is with the School of Science, Great Bay University, Dongguan, China; Wei Su (suwei@lzu.edu.cn) and JunTong Fan are co-corresponding authors.
}
}
   
\maketitle
\begin{center}
\footnotesize
This work has been submitted to the IEEE for possible publication.
Copyright may be transferred without notice, after which this version
may no longer be accessible.
\end{center}

\begin{abstract}
The escalating size of pretrained neural networks has rendered model compression a prerequisite for deployment under stringent memory and compute constraints. With the irrational winding as an example, earlier work introduced a dynamic system (DS) paradigm that reconceptualizes compression as compact weight representation: high-dimensional parameters are encoded by the index of a trajectory produced by a dynamic system, from which the vector is recovered during decompression. This mechanism is fundamentally distinct from pruning, quantization, knowledge distillation, and low-rank decomposition. Along this direction, we prove that under a Diophantine condition, a finite trajectory of \(M = O(\epsilon^{-(d+\nu)})\) states in the irrational winding constitutes an \(\epsilon\)-net over the \(d\)-dimensional weight space, thereby linking state resolution, decompression error, and compression ratio in a predictable manner. Furthermore, we propose a generalized DS-based model compression framework by unifying four DS families---space-filling curves (Hilbert, Peano, Morton/Z-order, Snake), chaotic systems (Lorenz), congruential and pseudo-random generators (LCG, PCG), and low-discrepancy sequences (Halton). Also, we introduce the KD-tree and coordinate-template acceleration to scale to large models as well as outlier identification to control the error. Experiments on ResNet-18 and Qwen2.5-1.5B/Qwen1.5-7B validate that DS-based compression achieves competitive compression ratios without post-hoc retraining, with controllable decompression error and flexible state-space design, establishing it as a principled and practical compression approach. 
\end{abstract}

\begin{IEEEkeywords}
Model compression, dynamic systems, neural networks, parameter representation.
\end{IEEEkeywords}

\section{Introduction}

In the past decade, the emergence of large-scale pretrained models creates substantial deployment challenges, particularly in resource-constrained settings such as workstations, smartphones, and wearables \cite{zhou2024survey}. Loading LLaMA-2 70B weights in FP16 precision requires approximately 140~GB, nearly six times the memory capacity of an NVIDIA GeForce RTX 4090~\cite{nvidia2022rtx4090}. 
The development of multimodal world models \cite{yin2024survey,zhang2026worldinworld} is likely to further worsen this issue due to the introduction of vision and speech encoders alongside extended context lengths. Consequently, model compression has emerged as a critical strategy for serving large models, aiming to minimize storage requirements without compromising task performance~\cite{zhu2024survey}.
This is quite different from the pre-large-model era, where model sizes remained modest, and the incentives for compression were comparatively weak.

Classical model compression methods fall into four primary categories: pruning, quantization, knowledge distillation, and low-rank decomposition~\cite{wan2024efficient}: i) \textit{Pruning} removes unimportant parameters or structural components ~\cite{frantar2023sparsegpt}, which pursues the parameter sparsity within a model; ii) \textit{Quantization} reduces the numerical precision of weights and activations (e.g., FP32 → INT8/INT4) to decrease storage and bandwidth requirements~\cite{frantar2023gptq}; iii) \textit{Knowledge distillation} transfers knowledge from a large teacher model to a compact student model~\cite{hinton2015distilling, gu2024minillm}, which typically requires additional training, task data, and careful teacher--student design; iv) \textit{Low-rank factorization} approximates weight matrices or tensors using the low-rank decomposition~\cite{sainath2013low}. The achievable compression depends on the intrinsic rank of weights, and can vary substantially across layers, architectures, and tasks. In essence, classical methods exploit different forms of redundancy in the original parameter space such as sparsity, reduced numerical precision, or low-rank structure.

Recently, inspired by the biological genomic bottleneck \cite{shuvaev2024encoding}, a novel basic mechanism for model compression was discovered and engineered in \cite{fan2026hypercompression}, which uses the ergodicity principle that a low-dimensional dynamic system can fill the high-dimensional space. Mathematically, given target weights \(\mathbf{w} = [w_1,\ldots, w_N]^\top\), \cite{fan2026hypercompression} \textbf{employs} irrational winding to construct a function \(h(\theta;n)\) to approximate \textbf{each weight such that} \(w_n \approx h(\theta;n)\), where the number of trajectory parameters \(\theta\) is significantly smaller than the total number of weights (\(\dim(\theta) \ll N\)). This turns model compression into a problem of parsimonious parameter representation. Model compression via dynamic systems is not merely incremental. Instead, it shifts away from decades-old techniques by opening new research directions at the intersection of dynamic systems and efficient deep learning, while also being practically viable for billion-parameter models in resource-constrained settings. Introducing dynamic systems carves out a new research direction in model compression. Nevertheless, several critical theoretical and methodological challenges must be addressed to fully realize the potential of this approach:

Theoretically, \textbf{the relationship between the compression ratio and the error remains unexplored}. Due to the non-locality of the irrational winding, to achieve the same level of error, how large $\theta$ needs to be at least is intractable. Accordingly, the compression ratio is unpredictable. This is a fundamental question in this domain, and solving it will transform the dynamic system from a heuristic into a principled, predictable tool. For example, one can predict to what extent increasing $\theta$ will surely reduce the approximation error, thereby providing the lower bound for the compression ratio and facilitating allocating bits across layers fairly. Methodologically, \textbf{since model compression via dynamic systems is general, can we enrich the armory of this kind by effectively engineering other dynamic systems?} The dynamic system (DS) theory provides a broader design space beyond the irrational winding. Instead of treating irrational winding as the default choice, a broader design space of transformation-induced compression mappings characterized by different coverage, locality, and indexability properties should be examined. 

In this manuscript, we theoretically establish both the asymptotic feasibility and the practical optimizability of DS-based compression. First, the covering guarantee (Theorem \ref{relation_error}) establishes that, under Diophantine conditions, \(M = O(\epsilon^{-(d+\nu)})\) trajectory states form an \(\epsilon\)-net over the weight space, directly linking the compression ratio to the reconstruction error through the covering radius. This justifies why a low-dimensional scalar can approximate high-dimensional weights with bounded distortion. Second, the optimization procedure addresses the discontinuity of the irrational winding by partitioning the domain at all breakpoints; within each continuous segment, the loss reduces to a convex quadratic with a closed-form minimizer, enabling exact global minimization. Together, these results transform the theoretical density guarantee into a practical, deterministic algorithm with a predictable error profile, ensuring that the mechanism is not merely an existence statement. Methodologically, we expand this novel framework from a single ergodic construction to a broader toolbox, \textit{i.e.}, space-filling curves, chaotic dynamics, pseudo-random sequences, and low-discrepancy sequences, which cover representative DS families:

\begin{itemize}
    \item \textit{Space-filling and grid traversal mappings} (\texttt{Hilbert}~\cite{hilbert1891ueber}, \texttt{Peano}~\cite{peano1890courbe}, \texttt{Morton/Z-order}~\cite{morton1966computer}, and \texttt{Snake} traversal): 
    deterministic mappings from compact indices to high-dimensional grid points, offering different locality-preserving structures for grouped weight representation.

    \item \textit{Chaotic-system-based mappings} 
    (\texttt{Lorenz} system~\cite{lorenz1963deterministic}): 
    nonlinear trajectory-based mappings that provide aperiodic coverage of the weight representation space beyond regular grid traversal.

    \item \textit{Congruential and pseudo-random generator mappings} 
    (Linear Congruential Generator (\texttt{LCG})~\cite{lehmer1951mathematical} and Permuted Congruential Generator (\texttt{PCG})): 
    efficient deterministic point generators that produce reproducible pseudo-random sequences with simple indexable structures for scalable compression.

    \item \textit{Quasi-Monte Carlo sequences} 
    (\texttt{Halton}~\cite{halton1960efficiency}): 
    deterministic low-discrepancy sequences that provide uniform coverage for compact weight representation.
\end{itemize}
These families differ in coverage guarantees, computational cost, and approximation accuracy, thereby leading to different compression ratios, speed, and decompression fidelity. The central problem in developing DS-based compression is how to enable fast, accurate, and scalable compression. We introduce effective engineering techniques such as KD-tree to speed up the compression speed and outlier identification to control the error. Particularly, for the space-filling curves, we design the coordinate template method to further accelerate the compression relative to the KD-tree search. 

In brief, our contributions are threefold:

\begin{itemize}
    \item We lay a solid theoretical foundation for model compression based on dynamic systems by illustrating the asymptotic feasibility. To the best of our knowledge, it is the first theoretical result in the direction of DS-based model compression. 
    
    \item We build a general framework for model compression via dynamic systems, engage mainstream dynamic systems as the components, and design engineering techniques to make this framework scalable to large models. 
    
    \item Systematic and comprehensive experiments confirm that DS-based model compression can deliver competitive performance on ResNet and large models such as Qwen2.5-1.5B and Qwen1.5-7B.
\end{itemize}

\section{Related Work}
\label{sec:related}

\subsection{Classical Model Compression Methods}

Model compression aims to minimize storage requirements without incurring substantial performance degradation. Established techniques include pruning, quantization, knowledge distillation, and low-rank decomposition.

\textbf{Pruning} compresses neural networks by removing redundant weights, channels, or structural components. 
Early methods such as Optimal Brain Damage~\cite{lecun1990optimal} and Optimal Brain Surgeon~\cite{hassibi1993optimal} estimate parameter saliency using second-order information, while structured pruning improves hardware compatibility by uniformly removing filters or channels rather than isolated weights~\cite{li2017pruning}. 
For large language models, SparseGPT~\cite{frantar2023sparsegpt} formulates pruning as a sparse regression problem for one-shot compression. Wanda~\cite{sun2024wanda} combines weight magnitude with activation statistics to guide pruning; it requires a small calibration set to estimate activation statistics. 
Recent work further studies structured and task-sensitive LLM pruning, including layer-level pruning~\cite{lu2026reassessing} and reasoning-aware pruning based on chain-of-thought reconstruction~\cite{lucas2025reasoning}. 
Despite these advances, practical speedups from sparsity often depend on specialized sparse kernels, storage formats, and hardware support~\cite{hoefler2021sparsity}.

\textbf{Quantization} reduces storage and memory bandwidth requirements by adopting lower numerical precision for weights and activations. 
For large models, post-training quantization (PTQ) has gained prominence due to its convenience. 
ZeroQuant~\cite{yao2022zeroquant} provides fine-grained weight and activation quantization for large Transformers, while SQuant~\cite{guo2022squant} introduces a data-free PTQ through a diagonal-Hessian approximation. SmoothQuant~\cite{xiao2023smoothquant} applies activation-to-weight smoothing transformations to mitigate activation outliers.

For weight-only compression, GPTQ~\cite{frantar2023gptq} employs approximate Hessian-based layer-wise reconstruction to minimize the error; AWQ~\cite{lin2024awq} protects salient weight channels through activation awareness.
SpQR~\cite{dettmers2024spqr} retains quantization-sensitive outliers in higher precision, while QuIP~\cite{chee2023quip} improves low-bit quantization through incoherence processing.

\textbf{Knowledge Distillation}
transfers knowledge from a large teacher model to a compact student model~\cite{hinton2015distilling}, with extensions to intermediate feature matching~\cite{romero2015fitnets} and relation transfer~\cite{park2019relational}. For large language models, MiniLLM~\cite{gu2024minillm} adopts reverse KL divergence to improve distribution matching, while recent work extends knowledge distillation to pre-training~\cite{gu2025kdpretraining} and improves teacher--student alignment through speculative training or score-matching objectives~\cite{xu2025speculativekd,kim2026concretescore}.
However, distillation typically requires additional training and task data, and its effectiveness depends on student capacity, teacher quality, and teacher--student distribution alignment~\cite{stanton2021does}.

\textbf{Low-Rank Decomposition} approximates weight matrices using products of lower-dimensional factors~\cite{sainath2013low}, while tensor decomposition extends this idea to higher-order parameter structures~\cite{novikov2015tensorizing}. 
Parameter-efficient fine-tuning methods such as LoRA~\cite{hu2022lora} further suggest that large-model adaptations often lie in low-dimensional subspaces. 
Recent LLM compression methods increasingly use low-rank components as residual or subspace corrections, including low-precision and low-rank decomposition~\cite{saha2024caldera}, low-rank approximation combined with quantization and sparsity~\cite{mozaffari2025slim}, and low-rank residuals for mixed-precision quantization~\cite{saxena2025resq}.
However, the achievable compression depends on the intrinsic rank structure of the weights, and can vary substantially across layers, architectures, and tasks.

Here, Our work falls under the umbrella of model compression based on dynamic systems, which is orthogonal to these classical methods by representing model weights through a compact representation.

\subsection{Model Compression via dynamic systems}

Motivated by ergodic theory~\cite{walters1982introduction}, Fan \textit{et al.} \cite{fan2026hypercompression} introduce a distinct model compression paradigm by employing irrational winding to generate dense trajectories over the parameter domain and report approximately $2.6\times$ compression on LLaMA2-7B. \cite{fan2025compresssam} applies the irrational winding-based model compression to SAM and its variants, demonstrating its applicability to large segmentation foundation models. 
The method introduces HyperLinear, a dedicated operator that fuses decompression with matrix multiplication to improve the inference efficiency of compressed SAMs. COLI~\cite{wang2025coli} incorporates the irrational winding-based model compression into an INR-based large-image compression framework, where NeRV represents images as neural network parameters, and DS is used to further compress the resulting model weights. 
This shows how DS can serve as a plug-and-play post-training parameter-compression module beyond standard classification and language-model settings.

However, many dynamic systems conform to ergodicity. Irrational winding provides a mathematically tractable instantiation, and the existence of better dynamic systems is possible. This leaves a broad design space of DS-induced compression mappings largely unexplored.
Here, we analyze and compare a broad family of dynamic systems under a unified theoretical and empirical framework as illustrated in Figure \ref{fig:distributions}. 

\subsection{Dynamic Systems and Deterministic Mappings}
\begin{figure}[t]
    \centering
    \includegraphics[width=\linewidth]{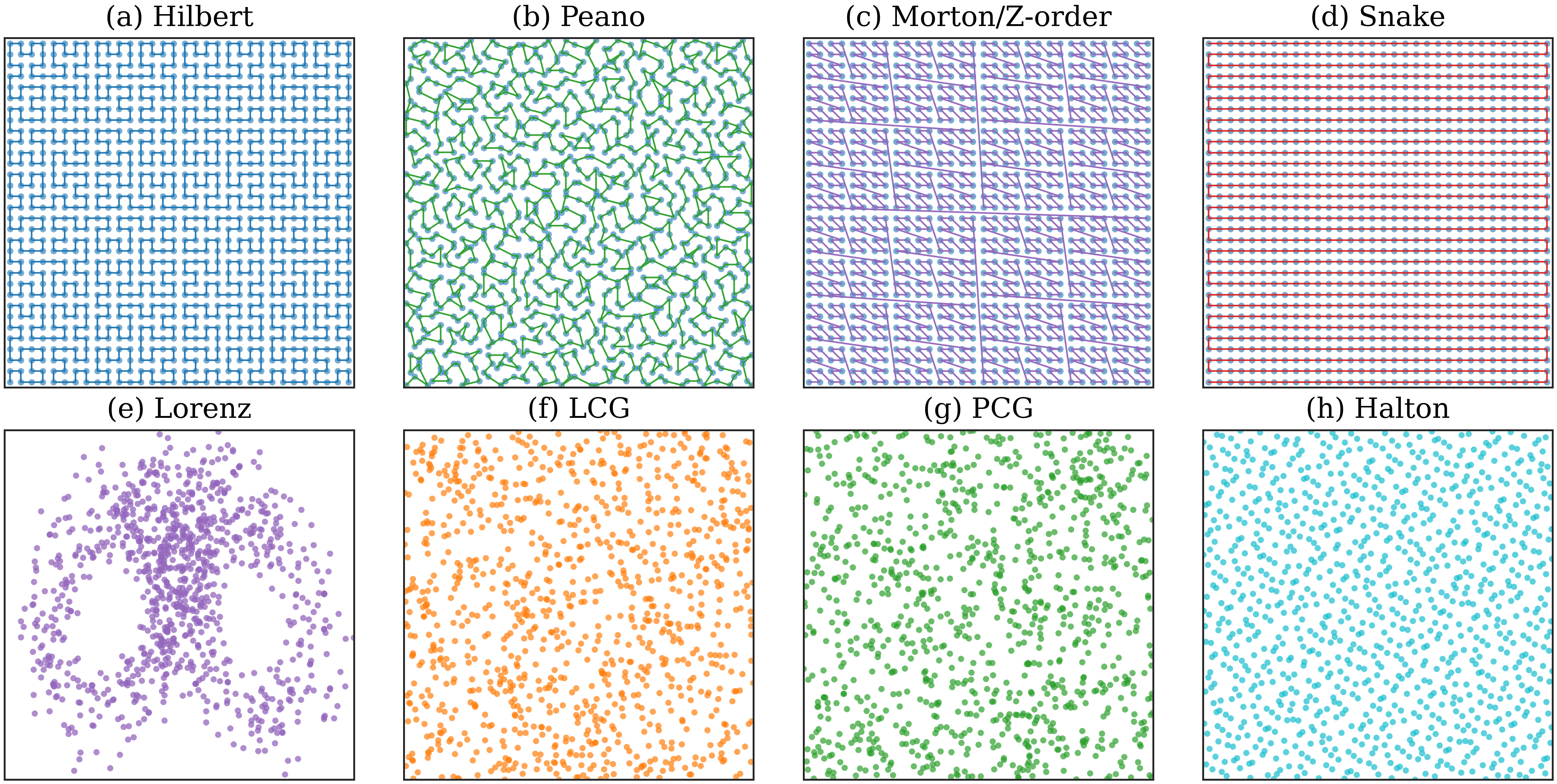}
    \caption{Representative geometries generated by eight evaluated mapping families. 
    \texttt{Hilbert}, \texttt{Peano}, \texttt{Morton/Z-order}, and \texttt{Snake} produce structured lattice traversals with different locality characteristics and traversal continuity. \texttt{Lorenz} generates a non-uniform attractor-induced distribution concentrated on a low-dimensional manifold. 
    \texttt{LCG} and \texttt{PCG} produce deterministic pseudo-random point sets with different correlation structures, while \texttt{Halton} generates low-discrepancy coverage with reduced large-scale gaps. 
    All methods are visualized after normalization to the same local domain.
    }
\label{fig:distributions}
\vspace{-0.6cm}
\end{figure}

Dynamic systems describe how the state of a system evolves according to a fixed rule. 
In mathematical terms, an evolution map \(\Phi(t,\x):T\times \mathcal{X}\rightarrow \mathcal{X}\) maps an initial state \(\x\in \mathcal{X}\) in a phase space \(\mathcal{X}\) to its future state under the evolution parameter \(t\). 
This formulation provides a compact way to generate structured trajectories or point sets. Common dynamic systems include space-filling curves, chaotic systems, pseudo-random generators, and low-discrepancy sequences. As illustrated in Figure~\ref{fig:distributions}, 
i) space-filling curves, such as \texttt{Hilbert}, \texttt{Peano}, and \texttt{Morton/Z-order} mappings, define structured traversals between the one-dimensional index and multi-dimensional coordinates, often with locality-preserving behaviors. 
\texttt{Snake} ordering is a simpler grid traversal based on deterministic scanning. ii) Pseudo-random generators, including linear congruential generators (\texttt{LCG}) and permuted congruential generators (\texttt{PCG}), produce deterministic sequences with compact mathematical descriptions. iii) Chaotic systems such as the \texttt{Lorenz} system generate trajectory-based point sets with nontrivial phase-space structure. iv) Low-discrepancy sequences, such as \texttt{Halton}, provide more uniform domain coverage than random samples, and are widely used in quasi-Monte Carlo methods (\texttt{QMC}).

These dynamic systems induce a state set with different locality, coverage, regularity, and indexing properties, making them natural candidates for studying DS-based model compression. Our work marries the dynamic system with model compression by exploring the possibility of using these dynamic systems as a model compression tool, which extends the application boundary of dynamic systems into AI, and shall bring many new research opportunities. 

\section{Theory}
\label{sec:theory}
  
\begin{proposition}[\cite{katok1995introduction}]
\label{prop1}
Suppose $a_1,\cdots,a_N$ are rationally independent, 
for any given set $\{w_n\}_{n=1}^N \subseteq [0,1]$ and $\epsilon > 0$, there exists $\theta^* \in \mathbb{R}_+$ such that
\begin{equation}
    |w_n-\tau(\theta^*a_n)|<\epsilon,\; n=1,\cdots,N,
\label{eqn:core} 
\end{equation} where $\tau(z)=z-\lfloor z \rfloor$.
\label{ir}
\end{proposition}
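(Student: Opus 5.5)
The plan is to derive Proposition~\ref{prop1} from the density of the linear flow $t\mapsto (ta_1,\dots,ta_N) \bmod 1$ on the torus $\T^N=\R^N/\Z^N$, which is the continuous-time form of Kronecker's theorem. Since $\tau(\theta a_n)$ is exactly the $n$-th coordinate of the projection of $\theta\a$ onto $\T^N$ (with $\a=(a_1,\dots,a_N)$), the statement says that the forward orbit $\{\theta\a \bmod 1:\theta>0\}$ comes within $\epsilon$, coordinatewise, of the point $\w=(w_1,\dots,w_N)$. One wrinkle is that $w_n=1$ is allowed while $\tau$ takes values in $[0,1)$. I will handle this by targeting a point slightly below $1$, and more generally by aiming at the point $\w'=(1-\epsilon/2)\w$, which lies in $[0,1)^N$ and satisfies $|w_n-w'_n|\le\epsilon/2$. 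It then suffices to get within $\epsilon/2$ of $\w'$ in the ordinary (non-wrapped) metric. Because $w'_n\in[0,1-\epsilon/2]$, closeness in the torus metric of radius $\epsilon/4$ gives closeness in $[0,1)$ without wraparound, so I can argue entirely on the torus.

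The core step is density of the orbit, proved by Weyl equidistribution. Take a smooth bump $f\ge 0$ on $\T^N$ supported in the $\epsilon/4$-neighborhood of $\w'$ with $\int f>0$, and expand it in its Fourier series $f=\sum_{\k\in\Z^N}\hat f(\k)e^{2\pi i \k\cdot \x}$, which converges absolutely. For $\k\neq 0$, rational independence gives $\k\cdot\a\neq 0$, and so
\[\frac1T\int_0^T e^{2\pi i t\,\k\cdot\a}\,dt=\frac{e^{2\pi i T\k\cdot\a}-1}{2\pi i T\,\k\cdot\a}\to 0 .\]
By dominated convergence over $\k$, the time average $\frac1T\int_0^T f(t\a)\,dt$ tends to $\hat f(0)=\int f>0$. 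Hence $f(\theta\a)>0$ for some $\theta>0$, i.e.\ $\theta\a \bmod 1$ lies in the chosen neighborhood, and this $\theta$ serves as $\theta^*$.

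The main obstacle is the precise use of rational independence. As stated, the hypothesis is that no nontrivial \emph{integer} relation $\sum k_na_n=0$ holds, and this is exactly what kills every nonzero Fourier mode in the continuous-time flow. This is weaker than the discrete-time Kronecker condition, which would also require independence together with $1$. I will check that the continuous flow indeed needs only $\k\cdot\a\ne0$, which the formula above confirms. I will also ensure $\theta^*>0$ strictly, which holds since the averaging is over $(0,T]$. An alternative I could fall back on is to reduce to the discrete Kronecker theorem by sampling at $\theta=m\delta$ for a generic $\delta$, but the Fourier argument is cleaner and self-contained.
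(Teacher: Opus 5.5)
Your Weyl-averaging core is sound, but one step of the boundary reduction fails as written. The claim that $w'_n\in[0,1-\epsilon/2]$ rules out wraparound only protects the upper end. The torus ball of radius $\epsilon/4$ around $w'_n$ wraps past $0$ whenever $w'_n<\epsilon/4$.

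For instance, if $w_n=0$ then $w'_n=0$. The value $\tau(\theta a_n)=1-\epsilon/8$ satisfies $\|\tau(\theta a_n)-w'_n\|_{\T}=\epsilon/8<\epsilon/4$, yet $|w_n-\tau(\theta a_n)|=1-\epsilon/8$, which is not less than $\epsilon$ for, say, $\epsilon\le 1/2$. Your bump argument only guarantees $f(\theta\a)>0$, so the orbit point may land in this wrapped piece.

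The fix is to keep the target away from both ends. Assume without loss of generality $\epsilon<1$ and aim at $w'_n=\epsilon/4+(1-\epsilon/2)w_n\in[\epsilon/4,1-\epsilon/4]$, which satisfies $|w'_n-w_n|=\tfrac{\epsilon}{4}|1-2w_n|\le\epsilon/4$. The torus ball of radius $\epsilon/4$ around $w'_n$ is then the image of the real interval $(w'_n-\epsilon/4,w'_n+\epsilon/4)\subseteq(0,1)$. Any $\tau(\theta a_n)$ in it therefore satisfies $|\tau(\theta a_n)-w_n|<\epsilon/2$. Equivalently, for each $n$ the set $\{x\in(0,1):|x-w_n|<\epsilon\}$ contains a nonempty open interval whose image in $\T$ is open, and you only need the orbit to meet the product of these open sets.

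A smaller point concerns positivity of $\theta^*$. It follows because a positive time average forces $\{t\in[0,T]: f(t\a)>0\}$ to have positive measure, hence to contain some $t>0$.

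With that repair your proof is correct, and it takes a different route from the paper. The paper gives no argument for Proposition 1 and simply cites Katok--Hasselblatt, where the result is minimality of the linear flow on $\T^N$ with rationally independent frequencies. The usual textbook derivation reduces the flow to a torus translation and applies the discrete Kronecker theorem. The reduction is either via a Poincar\'e section or by sampling at a step $t_0$ with $t_0a_1,\dots,t_0a_N,1$ rationally independent (all but countably many $t_0$ work), which is your fallback.

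Your Fourier argument buys two things:
\begin{itemize}
\item It is self-contained and proves the stronger statement that the flow is equidistributed.
\item It is the continuous-time, qualitative counterpart of the paper's Lemma 1: $\frac1T\int_0^T$ replaces $\sum_{m=0}^{M}$, and the condition $\k\cdot\a\neq0$ replaces $\|\k\cdot\a\|_{\T}>0$.
\end{itemize}
Your remark that only independence of the $a_n$ themselves, not together with $1$, is needed is correct. It is exactly why Proposition 1, with its continuous parameter $\theta$, holds under a weaker hypothesis than the discrete-orbit statements in Lemma 1 and Theorem 1.
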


Intrinsically, earlier work \cite{fan2026hypercompression} just engineers Proposition \ref{ir} and shows the feasibility of model compression based on dynamic systems. Here, we elaborate on the relationship between $\theta$ and $\epsilon$ under mild conditions, which is a fundamental theoretical issue regarding using the irrational winding to do model compression. 

We use the following standard definitions:

\begin{itemize}
    \item For \(\x = (x_1, \dots, x_d) \in \R^d\), the sup-norm is
    \[
    \norm{\x}_{\infty} = \max_{1 \le j \le d} |x_j|.
    \]
    \item For \(z \in \R\), the torus distance to the nearest integer is
    \[
    \norm{z}_{\T} = \min_{m \in \Z} |z - m|.
    \]
    For vectors \(\z \in \R^d\), we take the coordinate-wise torus distance:
    \[
    \norm{\z}_{\T^d} = \max_{1 \le j \le d} \norm{z_j}_{\T},
    \] where $\T^d = \mathbb{R}^d / \mathbb{Z}^d$, which is a torus meaning that two vectors \(\x, \y \in \mathbb{R}^d\) are identified if and only if their difference is an integer vector.
\end{itemize}

\begin{lemma}[Density of the $\epsilon$-net]
Let \(K > 0\), and define
\begin{equation}
\Delta(K) := \min_{0<|\k|\le K} \norm{\k\cdot \a}_{\T},
\end{equation}
where $\a=[a_1,a_2,\cdots,a_d]$ and \(|\k| = \sum_{j=1}^d |k_j|\). There exists a constant \(C_d\) depending only on the dimension \(d\) such that if
\begin{equation}
M \ge C_d \, \epsilon^{-d} \, \Delta(K)^{-1}, \qquad K \asymp \epsilon^{-1},
\end{equation}
then the finite set \(\{m\a : 0 \le m \le M\}\) is an \(\epsilon\)-net in \(\T^d\), i.e., every \(\bf{\beta} \in \T^d\) is within distance \(< \epsilon\) of some point in the set.
\label{density}
\end{lemma}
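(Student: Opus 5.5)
We will prove Lemma~\ref{density} by Fourier analysis on $\T^d$, smoothing with a bump function whose spectrum is essentially confined to frequencies $|\k|\le K$. Fix $\epsilon$ and choose a nonnegative kernel $\phi_\epsilon$ on $\T^d$ supported in the sup-norm ball of radius $\epsilon$ and normalized so that $\hat\phi_\epsilon(\mathbf{0})=\int\phi_\epsilon=1$. For instance, take a tensor product of Fej\'er-type or $B$-spline kernels of width $\epsilon/2$ convolved with itself, so that $\phi_\epsilon\ge 0$ and $\|\phi_\epsilon\|_\infty\lesssim_d \epsilon^{-d}$. The point $\beta\in\T^d$ is within distance $<\epsilon$ of some $m\a$ as soon as
\[
S(\beta):=\frac{1}{M+1}\sum_{m=0}^{M}\phi_\epsilon(m\a-\beta)>0 .
\]
So the goal is to show $S(\beta)>0$ for every $\beta$.

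Expand $\phi_\epsilon$ in its Fourier series and sum the geometric series in $m$:
\[
S(\beta)=1+\sum_{\k\neq \mathbf{0}}\hat\phi_\epsilon(\k)e^{-2\pi i\k\cdot\beta}\,\frac{1}{M+1}\sum_{m=0}^{M}e^{2\pi i m\,\k\cdot\a}.
\]
For each $\k\neq\mathbf{0}$ the inner average is bounded by $\min\{1,\,(2(M+1)\|\k\cdot\a\|_{\T})^{-1}\}$, the standard geometric-sum estimate $|\sum e^{2\pi i m x}|\le 1/(2\|x\|_\T)$. We split the sum into the low-frequency part $0<|\k|\le K$ and the tail $|\k|>K$.

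On the low-frequency part we use $\|\k\cdot\a\|_\T\ge\Delta(K)$ and $|\hat\phi_\epsilon(\k)|\le 1$. This gives a contribution of at most
\[
\frac{\#\{\k: 0<|\k|\le K\}}{2(M+1)\Delta(K)}\lesssim_d \frac{K^d}{M\Delta(K)}\asymp\frac{\epsilon^{-d}}{M\Delta(K)} .
\]
This is made $\le 1/3$ by the hypothesis $M\ge C_d\epsilon^{-d}\Delta(K)^{-1}$ with $C_d$ large. This is exactly where the factor $\epsilon^{-d}$ (the number of relevant frequencies) and $\Delta(K)^{-1}$ (the worst small divisor) in the statement come from.

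The tail $|\k|>K$ is the main obstacle. There are no small-divisor bounds there, so we must rely on the decay of $\hat\phi_\epsilon$ alone, using the trivial bound $1$ for the inner average. Choosing $\phi_\epsilon$ as a tensor product of $r$-fold self-convolutions of an indicator of width $\asymp\epsilon$ gives
\[
|\hat\phi_\epsilon(\k)|\le\prod_j\min\{1,(c\,\epsilon|k_j|)^{-r}\}.
\]
With $r\ge d+1$ and $K=A\epsilon^{-1}$ for a large constant $A$, we then get $\sum_{|\k|>K}|\hat\phi_\epsilon(\k)|\le 1/3$. The delicate bookkeeping is to check that this sum is controlled by a constant depending only on $d$ and $A$, via a comparison with an integral in rescaled variables $\epsilon\k$, and that the enlargement of the support by the $r$-fold convolution is absorbed by starting from width $\epsilon/(2r)$. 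Alternatively, we could use a compactly supported smooth bump and Schwartz decay. Combining the two parts gives $S(\beta)\ge 1-\tfrac13-\tfrac13>0$ for every $\beta$, which proves that $\{m\a:0\le m\le M\}$ is an $\epsilon$-net. All constants depend only on $d$, consistent with the relation $K\asymp\epsilon^{-1}$.
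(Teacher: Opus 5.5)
The step that fails is the tail bound. The failure is structural, not a matter of bookkeeping.

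\textbf{Why the tail cannot be made small.} Write your kernel as $\phi_\epsilon(x)=\epsilon^{-d}\psi(x/\epsilon)$ with a fixed profile $\psi$, $\int\psi=1$. Then $\hat\phi_\epsilon(\k)=\hat\psi(\epsilon\k)$, and the comparison with an integral in $u=\epsilon\k$ that you propose yields
\[
\sum_{|\k|>A\epsilon^{-1}}|\hat\phi_\epsilon(\k)|\approx\epsilon^{-d}\int_{|u|>A}|\hat\psi(u)|\,du .
\]
The factor $\epsilon^{-d}$ is the Jacobian of the rescaling. Equivalently, $\sum_{\k}|\hat\phi_\epsilon(\k)|\ge\|\phi_\epsilon\|_\infty$, which is of order $\epsilon^{-d}$, and by scaling the frequencies $|\k|>A\epsilon^{-1}$ carry an $\epsilon$-independent fraction of that mass. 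So the tail is not bounded by a constant depending only on $d$ and $A$.

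For your $r$-fold spline the tail is of order $\epsilon^{-d}A^{1-r}$. It drops below $1/3$ only when $A\ge c\,\epsilon^{-d/(r-1)}$, that is, when $K\ge c\,\epsilon^{-1-d/(r-1)}$. Then the count $\#\{0<|\k|\le K\}$ in your low-frequency step is much larger than $\epsilon^{-d}$, and you only prove a weaker lemma, with $\Delta$ taken at a larger $K$.

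A smooth compactly supported bump does not help. Its Fourier transform is entire, so it cannot vanish outside a compact set, and the tail is $c(A)\epsilon^{-d}$ with $c(A)>0$ for each fixed $A$. Nor can you improve the trivial bound on the inner average for $|\k|>K$, since the hypothesis says nothing about $\|\k\cdot\a\|_{\T}$ there. With a test function supported in the cube, $K\asymp\epsilon^{-1}$ is out of reach.

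\textbf{Comparison with the paper.} Your low-frequency computation coincides with the paper's entire argument: a bump, its Fourier expansion, the geometric-sum bound, and at most a constant times $K^d$ small divisors, each at least $\Delta(K)$. The paper avoids a tail term only by asserting a trigonometric polynomial $P$ of degree $K\asymp\epsilon^{-1}$ with $\sup|P-\phi|\le\delta$ for arbitrarily small $\delta$. That assertion meets the same scaling obstruction. The uniform error of a degree-$A\epsilon^{-1}$ approximation to a bump of height about one and width $\epsilon$ stays bounded below as $\epsilon\to0$, whereas beating the main term needs $\delta$ small compared with $\epsilon^d$. So you cannot borrow it to close your gap.

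\textbf{A fix.} The test function need not be supported in the cube. It only has to be a \emph{minorant} of the cube's indicator, and then it can be a trigonometric polynomial with no tail at all.

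Let $B=\prod_j I_j$ be the cube of half-side $\epsilon/2$ about $\beta$. Let $S_j^{\pm}$ be the one-dimensional Selberg polynomials of degree $H$ for $I_j$, so that $S_j^-\le\chi_{I_j}\le S_j^+$ and $\hat S_j^{\pm}(0)=\epsilon\pm\eta$, where $\eta=1/(H+1)$. Put
\[
P(x)=\sum_{j=1}^{d}S_j^-(x_j)\prod_{i\ne j}S_i^+(x_i)-(d-1)\prod_{i=1}^{d}S_i^+(x_i).
\]
This polynomial has the following properties.
\begin{enumerate}
\item Using $S_i^+\ge 0$, one checks directly that $P\le\chi_B$.
\item $\hat P(\mathbf{0})=(\epsilon+\eta)^{d-1}\bigl(\epsilon-(2d-1)\eta\bigr)\ge\epsilon^d/2$ once $H+1\ge 2(2d-1)/\epsilon$.
\item $|\hat P(\k)|\le C_d\,\epsilon^d$ for all $\k$.
\item $\hat P(\k)=0$ unless $|\k|\le dH=:K$, which is of order $\epsilon^{-1}$.
\end{enumerate}
If no $m\a$ lies in $B$, then $\sum_{m=0}^{M}P(m\a)\le 0$. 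Your low-frequency estimate alone then gives
\[
(M+1)\epsilon^d/2\le C_d\,\epsilon^dK^d\Delta(K)^{-1},
\]
that is, $M\le C_d'\,\epsilon^{-d}\Delta(K)^{-1}$. This contradicts the hypothesis once $C_d$ is large. This is essentially the Barton--Montgomery--Vaaler argument.
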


\begin{proof}

We argue by contradiction. Suppose there exists a cube \(Q\) of radius \(\epsilon\) centered at some \(\beta \in \T^d\) that contains no point \(m\a\) for \(0 \le m \le M\). Choose a non-negative smooth function \(\phi: \T^d \to \R_{\ge 0}\) such that
\begin{enumerate}
    \item \(\phi(\x) \ge 1\) on the sub-cube \(Q_{\epsilon/2}\) (radius \(\epsilon/2\));
    \item \(\operatorname{supp}(\phi) \subset Q\) (so \(\phi\) vanishes outside \(Q\));
    \item The integral satisfies \(\widehat{\phi}(0) := \int_{\T^d} \phi(\x) \, d\x \asymp \epsilon^d\).
\end{enumerate}
Such a function exists (e.g., a standard smooth bump function supported in \(Q\)).

Since the orbit points \(\{m\a\}_{m=0}^M\) avoid \(Q\), and \(\phi\) vanishes outside \(Q\), we have the exact identity:
\begin{equation}
\sum_{m=0}^{M} \phi(m\a) = 0.
\label{phi_support}
\end{equation}
Using a Fejér kernel truncation \cite{coroianu2017lp}, for any arbitrarily small \(\delta > 0\), there exists a trigonometric polynomial
\begin{equation}
P(\x) = \sum_{|\k| \le K} \widehat{P}(\k) e^{2\pi i \k \cdot \x}
\end{equation}
with frequencies bounded by \(K\) (where \(K \asymp \epsilon^{-1}\)) such that
\begin{equation}
\sup_{\x \in \T^d} |P(\x) - \phi(\x)| \le \delta. 
\label{approximation}
\end{equation}
Moreover, we can ensure the following standard properties:
\begin{equation}
\widehat{P}(0) \asymp \epsilon^d, \qquad |\widehat{P}(k)| \le \widehat{P}(0) \quad \text{for all } k. 
\end{equation}

Combining Eqs. \eqref{phi_support} and \eqref{approximation}, we slightly abuse the notation by writing
\begin{equation}
\sum_{m=0}^{M} P(m\a) = 0.
\end{equation}
On the other hand, expand the Fourier series:
\begin{equation}
\sum_{m=0}^{M} P(m\a) = \sum_{\k \in \Z^d} \widehat{P}(\k) \sum_{m=0}^{M} e^{2\pi i m (\k\cdot \a)}.
\end{equation}

Separating this term from the rest gives
\[
\sum_{m=0}^{M} P(m\a)
= \widehat{P}(0) (M+1) + \sum_{0 < |\k| \le K} \widehat{P}(\k) \sum_{m=0}^{M} e^{2\pi i m (\k\cdot \a)}.
\]
Since \(\widehat{P}(0)M \le \widehat{P}(0)(M+1)\), we have
\[
M \widehat{P}(0) \le (M+1) \widehat{P}(0) = \left| \sum_{0 < |\k| \le K} \widehat{P}(\k) \sum_{m=0}^{M} e^{2\pi i m (\k\cdot \a)} \right|.
\]

Finally, applying the triangle inequality to the finite sum on the right, we arrive at the key inequality:
\begin{equation}
M \widehat{P}(0) \le \sum_{0 < |\k| \le K} |\widehat{P}(\k)| \left| \sum_{m=0}^{M} e^{2\pi i m (\k\cdot \a)} \right|.
\label{key_eq}
\end{equation}

Eq. \eqref{key_eq} is the rigorous starting point for the rest of the proof. The crucial step here is the use of the \emph{reverse triangle inequality}, which correctly transfers the negligible approximation error \(\delta\) from the left side to the right side as an additive error term, rather than incorrectly assuming \(\sum P=0\).

Using the geometric series estimate:
\[
\left| \sum_{m=0}^{M} e^{2\pi i m \theta} \right| \le \min\left(M+1,\ \frac{1}{2\norm{\theta}_{\T}}\right) \le \frac{1}{\norm{\theta}_{\T}},
\]
and \(|\widehat{P}(k)| \le \widehat{P}(0)\), we obtain
\[
M \widehat{P}(0) \le \widehat{P}(0) \sum_{0<|\k|\le K} \frac{1}{\norm{\k\cdot \a}_{\T}}.
\]
Cancel \(\widehat{P}(0)\). By the definition of \(\Delta(K)\), for all \(0<|\k|\le K\), we have \(\norm{\k\cdot \a}_{\T} \ge \Delta(K)\). Therefore
\[
M \le \sum_{0<|\k|\le K} \frac{1}{\norm{\k\cdot \a}_{\T}} \le \frac{\#\{k:0<|\k|\le K\}}{\Delta(K)} \le \frac{C_d' K^d}{\Delta(K)}.
\]
Since \(K \asymp \epsilon^{-1}\), we have \(K^d \le C_d'' \epsilon^{-d}\). Hence,
\[
M \le C_d''' \, \epsilon^{-d} \, \Delta(K)^{-1}.
\]
This contradicts the lemma's assumption \(M \ge C_d \, \epsilon^{-d} \, \Delta(K)^{-1}\) (choosing \(C_d > C_d'''\)). Thus the contradiction is established, and the lemma follows. 
\end{proof}

\begin{theorem}
Let \(\a \in \R^d\) satisfy the linear form Diophantine condition: there exist constants \(c>0, \nu>0\) such that for all nonzero integer vectors \(\k \in \Z^d \setminus \{0\}\),
\[
\norm{\k \cdot \a}_{\T} \ge c |\k|^{-\nu}.
\]
Then there exist constants \(C>0\) and \(\epsilon_0>0\) such that for every \(0<\epsilon<\epsilon_0\) and every \(\beta \in \T^d\), there exists an integer \(m\) satisfying
\[
0 \le m \le C \epsilon^{-(d+\nu)},\qquad \norm{m\a - \beta}_{\T^d} < \epsilon.
\]
\label{relation_error}
\end{theorem}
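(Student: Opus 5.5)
The plan is to derive the theorem directly from Lemma~\ref{density}, using the Diophantine hypothesis to bound $\Delta(K)$ from below. The lemma states that the orbit segment $\{m\a: 0\le m\le M\}$ is an $\epsilon$-net in $\T^d$ as soon as $M \ge C_d\,\epsilon^{-d}\Delta(K)^{-1}$ with $K\asymp \epsilon^{-1}$. It therefore suffices to show that, under the Diophantine condition, $\Delta(K)^{-1}$ grows at most like $K^{\nu}\asymp \epsilon^{-\nu}$.

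\textbf{Step 1 (lower bound on $\Delta(K)$).} For every nonzero $\k$ with $|\k|\le K$, the hypothesis gives
\[
\norm{\k\cdot\a}_{\T}\ge c|\k|^{-\nu}\ge cK^{-\nu}.
\]
Taking the minimum over the finitely many such $\k$ yields $\Delta(K)\ge cK^{-\nu}$, that is, $\Delta(K)^{-1}\le c^{-1}K^{\nu}$.

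\textbf{Step 2 (choice of $K$ and $M$).} Fix the implicit constant in $K\asymp\epsilon^{-1}$ that is used inside the lemma, say $K=\lceil A\epsilon^{-1}\rceil$ with $A$ depending only on $d$. Then $K\le 2A\epsilon^{-1}$ once $\epsilon<\epsilon_0:=A$, so that $K\ge1$ and the frequency set $0<|\k|\le K$ is nonempty. This gives
\[
C_d\,\epsilon^{-d}\Delta(K)^{-1}\le C_d c^{-1}(2A)^{\nu}\,\epsilon^{-(d+\nu)}.
\]
Set $C:=C_d c^{-1}(2A)^{\nu}+1$ and $M:=\lfloor C\epsilon^{-(d+\nu)}\rfloor$. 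Because $\epsilon<\epsilon_0$, the quantity $\epsilon^{-(d+\nu)}$ is bounded below, so $M\ge C_d\epsilon^{-d}\Delta(K)^{-1}$ even after the floor is taken (the extra $+1$ in $C$ absorbs the rounding). The hypothesis of Lemma~\ref{density} is thus met.

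\textbf{Step 3 (conclusion).} By Lemma~\ref{density}, every $\beta\in\T^d$ lies within sup-torus distance $<\epsilon$ of some point $m\a$ with $0\le m\le M\le C\epsilon^{-(d+\nu)}$, which is exactly the claim. One should also confirm that the distance used in the lemma (cubes of radius $\epsilon$) is the $\norm{\cdot}_{\T^d}$ max-norm; it is.

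The main obstacle is not algebraic but foundational: the whole argument relies on Lemma~\ref{density} holding with $K\asymp\epsilon^{-1}$ and a constant depending only on $d$. The Fourier step there, in particular the bound $|\widehat P(\k)|\le\widehat P(0)$ and the control of the approximation error $\delta$, must hold uniformly in $\epsilon$. In writing the theorem's proof I would treat the lemma as a black box, as permitted. If a self-contained route were needed, I would instead rederive the bound by summing $\widehat P(\k)\cdot\min(M+1,\,1/(2\norm{\k\cdot\a}_{\T}))$ with $\norm{\k\cdot\a}_{\T}\ge c|\k|^{-\nu}$ directly. That gives $\sum_{|\k|\le K}|\widehat P(\k)|\,|\k|^{\nu}\lesssim \widehat P(0)K^{d+\nu}$, which reproduces the same exponent $d+\nu$.
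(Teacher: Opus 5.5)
Your proposal follows the paper's proof: you use the Diophantine condition to get $\Delta(K)\ge cK^{-\nu}\ge c'\epsilon^{\nu}$ for $K\asymp\epsilon^{-1}$ and substitute this into Lemma~\ref{density}, and your explicit choice of $K=\lceil A\epsilon^{-1}\rceil$, $\epsilon_0$, and $M$ is more careful than the paper's. One small fix is needed: absorbing the floor in $M=\lfloor C\epsilon^{-(d+\nu)}\rfloor$ via the extra $+1$ in $C$ requires $\epsilon^{-(d+\nu)}\ge 1$, so take $\epsilon_0=\min(A,1)$ rather than $\epsilon_0=A$.
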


\begin{proof}
By the Diophantine condition, taking \(K \asymp \epsilon^{-1}\), we have
\[
\Delta(K) = \min_{0<|\k|\le K} \norm{\k\cdot \a}_{\T} \ge \min_{0<|\k|\le K} c |\k|^{-\nu} \ge c K^{-\nu}.
\]
Since \(K \asymp \epsilon^{-1}\), there exists a constant \(c'>0\) such that
\[
\Delta(K) \ge c' \epsilon^{\nu}.
\]

Substituting this lower bound into Lemma \ref{density}, there exists a constant \(C>0\) such that whenever
\[
M \ge C \epsilon^{-d} \cdot (\epsilon^{\nu})^{-1} = C \epsilon^{-(d+\nu)},
\]
the point set \(\{m\a:0\le m\le M\}\) forms an \(\epsilon\)-net.

Hence, for any \(\beta \in \T^d\), there must exist \(0 \le m \le M\) satisfying:
\[
\norm{m\a - \beta}_{\T^d} < \epsilon.
\]
Choosing the constant \(C\) according to Lemma \ref{density} concludes the proof. 

\end{proof}

\textbf{Remark 1}. We argue that Theorem \ref{relation_error} holds in a general sense. The linear form Diophantine condition is by no means an exotic or artificially restrictive assumption. Common irrational numbers can satisfy this condition. For instance, the classical two-dimensional vector \(\a=(\sqrt{2}, \sqrt{3})\) satisfies the condition with the exponent \(\nu = 2\), as a direct consequence of the Schmidt Theorem \cite{schmidt1980diophantine}. Thus, rather than being a pathological rarity, the linear form Diophantine condition serves as a sharp but broadly applicable threshold.

\section{A Unified Framework}
\label{sec:methodology}

\subsection{Dynamic System Families}
\label{subsec:mapping_families}

Using a dynamic system converts a high-dimensional vector into low-dimensional trajectory parameters. Early work \cite{fan2026hypercompression} has shown that the irrational winding can indeed compress large models well.
In this work, we examine four common families of dynamic systems: space-filling/grid traversal mappings, chaotic trajectory-based mappings, congruential and pseudo-random generator mappings, and quasi-Monte Carlo sequence mappings. In doing so, we can enrich the armory of model compression via dynamic systems. Because there is no definitive theory on the superiority of dynamic systems, involving more tools can increase the likelihood of achieving better compression performance.   

\textbf{Space-filling curves}
 $(\mathtt{SFC})$ define deterministic mappings from a one-dimensional index to a \(D\)-dimensional lattice point:
\begin{equation}
    \Phi_{\mathtt{SFC}}:
    \{0,\ldots,S-1\}
    \rightarrow
    \{0,\ldots,m-1\}^{D},
    \qquad
    S=m^D,
    \label{eq:grid_mapping}
\end{equation}
where \(m\) denotes the per-axis resolution, and \(\mathtt{SFC}\) is the instantiated specific space-filling method. 

Representative space-filling curves are \texttt{Hilbert}, \texttt{Peano}, \texttt{Morton/Z-order}, and \texttt{Snake} traversals, as shown in Figure \ref{fig:distributions}(a)-(d).
Each dynamic system $\Phi$ can generate a trajectory 
\begin{equation}
    \Phi_{\mathtt{SFC}}(\xi),
    \quad\xi \in [0,S-1].
\end{equation}
For \texttt{Hilbert}, \texttt{Morton/Z-order}, and \texttt{Snake},
we use \(m=2^k\), whereas \texttt{Peano} uses \(m=3^k\).
Each mapping provides a deterministic bijection between the scalar
indices and the corresponding lattice points.
Detailed constructions of the four mappings are provided in the
Supplementary Material.

\textbf{Chaotic systems} can generate deterministic trajectories with complex aperiodic behavior under chaotic parameter settings. As a representative case, 
we instantiate this family with the \texttt{Lorenz} system:
\begin{equation}
    \dot{x}=\sigma(y-x),\quad
    \dot{y}=x(\rho-z)-y,\quad
    \dot{z}=xy-\gamma z .
    \label{eq:lorenz}
\end{equation}
where $\sigma$, $\rho$, and $\gamma$ are system parameters. With system parameters, an initial condition $(x_0,y_0,z_0)$, and an integration step size $\Delta t$, we numerically integrate Eq.~\eqref{eq:lorenz}.
After discarding the first $T_{\mathrm{trans}}$ transient states, we retain $S$ consecutive post-transient states:
\begin{equation}
\mathbf{h}_{\xi}
=
\bigl(x_{\xi},y_{\xi},z_{\xi}\bigr)
\in \mathbb{R}^{3},
\qquad
\xi=0,\ldots,S-1 .
\label{eq:lorenz_state}
\end{equation}
Here, $\xi$ denotes the retained trajectory index.

For the two-dimensional compression setting, the post-transient \texttt{Lorenz} trajectory is projected to a two-dimensional space by
\begin{equation}
\Phi_{\mathrm{Lorenz}}(\xi) 
=
P_{\mathrm{2D}} \left( \mathbf{h}_{\xi} \right) \in \mathbb{R}^{2}, 
\qquad
\xi=0,\ldots,S-1 , 
\label{eq:lorenz_projection} 
\end{equation}
where $P_{\mathrm{2D}} $ denotes the projection operator from the three-dimensional \texttt{Lorenz} states to the two-dimensional template space.
In our implementation, $P_{\mathrm{2D}}$ is obtained by PCA on the post-transient \texttt{Lorenz} states.

As shown in Figure \ref{fig:distributions}(e), unlike grid-based traversals, \texttt{Lorenz}-based systems do not lie on a regular lattice. 
Unlike \texttt{QMC} and congruential generators, it is not designed to uniformly cover the canonical domain.
Although the states are uniform in integration time, their spatial distribution follows the geometry and visiting density of the \texttt{Lorenz} attractor.
Under chaotic parameter settings, the \texttt{Lorenz} trajectory is confined to its attractor after the transient phase and repeatedly visits different regions of the attractor.
Thus, the retained post-transient samples can provide a dense empirical coverage of the attractor support as the trajectory length increases.

\textbf{Congruential and pseudo-random generators}
can be viewed as deterministic recursive systems that map a scalar index to a multi-dimensional system state. As shown in Figure \ref{fig:distributions}(f)-(g), 
unlike grid-based traversals, systems do not explicitly enumerate a dense grid. 
Instead, a \(D\)-dimensional system point is obtained by unfolding a one-dimensional recurrence for $D$ consecutive steps.

For \texttt{LCG}, the internal state follows the recurrence
\begin{equation}
    z_{t+1} = (a z_t + b) \bmod M ,
    \label{eq:lcg}
\end{equation}
where \(z_t\in\{0,\ldots,M-1\}\) denotes the integer state at recursion step \(t\), and \(a\), \(b\), and \(M\) denote the multiplier, increment, and modulus. 
Given the size \(S\) for the candidate set, we generate \(S+D-1\) scalar states and form the \(\xi\)-th \(D\)-dimensional template by a sliding window:
\begin{equation}
\Phi_{\mathrm{LCG}}(\xi)
=\bigl(
z_{\xi},
\ldots,
z_{\xi+D-1}
\bigr)
\in \{0,\ldots,M-1\}^{D},
    \label{eq:lcg_recursive}
\end{equation}
where \(\xi\in\{0,1,\ldots,S-1\}\) denotes the index.

For \texttt{LCG}, we set \(M=2^k\) and use full-period parameters satisfying the Hull--Dobell conditions~\cite{hull1962random}.
This guarantees a full-period state transition for each modulus \(M\).

\texttt{PCG} uses the same congruential state transition as Eq.~\eqref{eq:lcg}, but does not directly induce the raw state \(z_t\). Instead, it applies a deterministic output permutation:
\begin{equation} 
    y_t 
    =
    \mathcal{P}(z_t), 
    \label{eq:pcg_output_permutation}
\end{equation}
where $ y_t\in\{0,\ldots,M-1\}$, and \(\mathcal{P}(\cdot)\) denotes a deterministic output transformation composed of bit mixing and state-dependent rotation operations. 
In our implementation, \(\mathcal{P}(\cdot)\) is a PCG-style output permutation as
\begin{equation}
\mathcal{P}(z)
=
\operatorname{xorshift}_{3}^{R}
\left(
\operatorname{rotr}_{k}
\left(
\mu \cdot
\operatorname{xorshift}_{5}^{R}(z),
\rho
\right)
\right),
\label{eq:pcg_permutation_compact}
\end{equation}
where
\begin{equation}
\operatorname{xorshift}_{q}^{R}(x)
=
x \oplus (x \gg q).
\label{eq:xorshift_right}
\end{equation}
Here, \(\oplus\) denotes bitwise XOR, \(\gg\) denotes right shift, \(\operatorname{rotr}_{k}(\cdot,\rho)\) denotes circular right rotation over a \(k\)-bit word, \(\mu\) is an odd multiplication constant, and \(\rho\) is a fixed rotation offset.
All operations are performed within the \(k\)-bit integer space.
Since xorshift, circular rotation, and multiplication by an odd integer are invertible over \(k\)-bit words, \(\mathcal{P}\) defines a deterministic bijection on \(\{0,\ldots,M-1\}\), where \(M=2^k\).

Using the emitted sequence \(\{y_t\}_{t\geq 0}\), PCG adopts the same sliding-window template construction as LCG.
The coordinate functions of the PCG system are then defined as 
\begin{equation}
\Phi_{\mathrm{PCG}}(\xi)
=
\bigl(
y_{\xi},
\ldots,
y_{\xi+D-1}
\bigr)
\in \{0,\ldots,M-1\}^{D},
\label{eq:pcg_recursive}
\end{equation}
where \(\xi\in\{0,1,\ldots,S-1\}\) denotes the template index, and \(S\) is the number of states.

Therefore, \texttt{LCG} and \texttt{PCG} share the same congruential state transition and sliding-window construction.
The difference lies in the sequence: \texttt{LCG} directly uses the raw state \(z_t\), whereas \texttt{PCG} uses the permuted output \(y_t=\mathcal{P}(z_t)\).
This output permutation changes the structure of the state set while preserving deterministic generation and efficient reconstruction.

\textbf{Quasi-Monte Carlo (QMC) sequences}
are deterministic low-discrepancy point sets that provide structured coverage of the local representation domain, as shown in Figure \ref{fig:distributions}(h). 
Compared with pseudo-random generators, \texttt{QMC} sequences are designed to reduce discrepancy, meaning that their empirical distribution better matches the target domain volume over axis-aligned regions. 
This makes them suitable for constructing relatively uniform states under a finite sampling budget.

We instantiate \texttt{QMC} mappings with the \(D\)-dimensional \texttt{Halton} sequence.
Let \(j=0,\ldots,S-1\) denote the candidate index and set \(\xi=j+1\) as the \texttt{Halton} sequence index. 
Let \((b_1,\ldots,b_D)\) be the coordinate bases, which are chosen as the first \(D\) prime numbers in our implementation.
The \texttt{Halton} state is defined as
\begin{equation}
\Phi_{\mathrm{Halton}}(\xi)
=
\bigl(
\varphi_{b_1}(\xi),
\varphi_{b_2}(\xi),
\ldots,
\varphi_{b_D}(\xi)
\bigr)
\in [0,1)^D ,
\label{eq:halton}
\end{equation}
where \(\varphi_b:\mathbb{N}\to[0,1)\) is the radical-inverse function in base \(b\). 
Specifically, if the base-$b$ expansion of $\xi$ is
\begin{equation}
\xi
=
\sum_{r=0}^{m}
a_r^{(b)}(\xi)b^r,
\qquad
a_r^{(b)}(\xi)\in\{0,\ldots,b-1\},
\end{equation}
then
\begin{equation} 
    \varphi_b(\xi) 
    =
    \sum_{r=0}^{\infty} \frac{a_r^{(b)}(\xi)}{b^{r+1}}.
    \label{eq:radical_inverse} 
\end{equation}
Here, \(a_r^{(b)}(\xi)\) denotes the \(r\)-th digit of \(\xi\) in its base-\(b\) expansion.
For example, when $D=2$ used in our implementation, the bases are $(b_1,b_2)=(2,3)$, giving
\begin{equation}
    \Phi_{\mathrm{Halton}}(\xi)=(\varphi_2(\xi),\varphi_3(\xi)).
\end{equation}

\subsection{The Framework of Model Compression via Dynamic Systems}
\label{subsec:implementation}

As Figure \ref{fig:overview} shows, this section describes the unified model compression framework based on dynamic systems. In this compression framework, each dynamic system serves as a plug-and-play component. 

\begin{figure}[t]
    \centering
    \includegraphics[width=\linewidth]{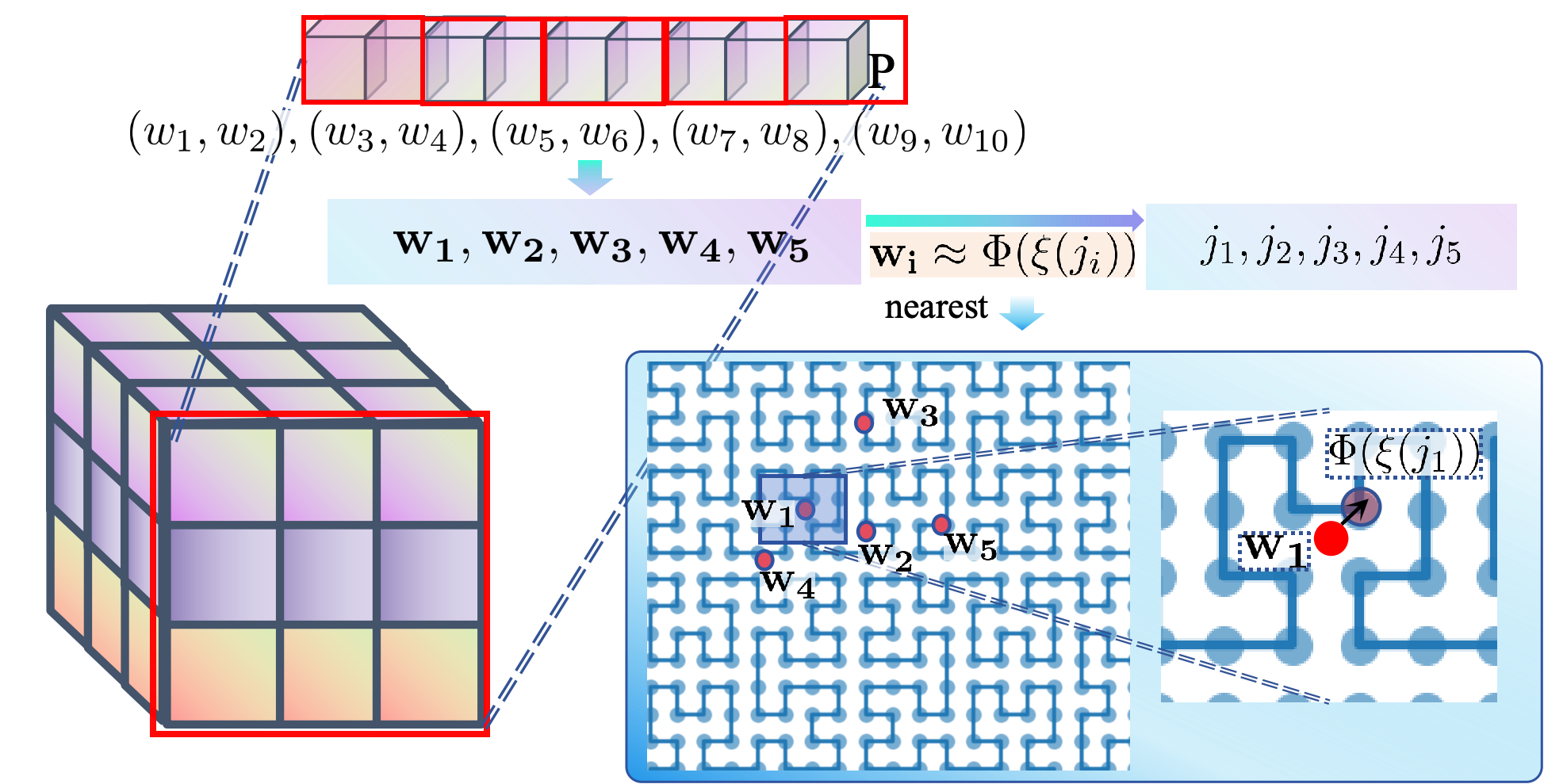}
\caption{
An overview of the proposed DS-based compression framework. The parameter tensor is flattened and partitioned into grouped weight vectors, which are mapped into a tensor-wise local domain. 
A dynamic system generates a finite trajectory of states, and each grouped vector is assigned to its nearest state and the associated index. 
The model is compressed by storing the resulting index instead of the original floating-point vectors, while the states are recovered from the index for decompression.
}
\label{fig:overview}
\vspace{-0.5cm}
\end{figure}

\textbf{Flatten}: Let \(\boldsymbol{\Theta}=\{\W^{(\ell)}\}_{\ell=1}^{L}\) denote the set of parameter tensors across $L$ layers in a pretrained neural network. 
For a tensor \(\W^{(\ell)}\), let \(N_\ell=\mathtt{numel}(\W^{(\ell)})\) denote the total number of weights. 
We flatten each tensor into a vector:
\begin{equation}
    \tilde{\w}^{(\ell)}
    =
    \mathtt{Flatten}\left(\W^{(\ell)}\right)
    \in\mathbb{R}^{N_\ell}.
\end{equation}

\textbf{Partition}: Since $N_l$ is huge, it is impossible for the algorithm to deal with $\tilde\w^{(l)}$ directly. Therefore, we use the divide-and-conquer strategy by partitioning $\tilde\w^{(l)}$ into small \(D\)-dimensional vectors. $\tilde\w^{(l)}$ is padded if necessary, so that the number of scalar entries is divisible by \(D\). The padded entries are discarded after decompression. Mathematically, suppose we have $M_\ell=\left\lceil \frac{N_\ell}{D}\right\rceil$ groups, 
\begin{equation}
    \left\{
    \bar{\w}^{(\ell)}_i
    \right\}_{i=1}^{M_\ell} = \mathtt{Partition}(\tilde{\w}^{(l)}),
    \qquad
    \bar{\w}^{(\ell)}_i \in \mathbb{R}^{D}
    .
    \label{eq:grouping}
\end{equation}
Each group contains \(D\) consecutive scalar parameters from the flattened tensor. 
Padded entries are recorded in the metadata and discarded after decompression.

\textbf{Shift and Scale}: Since different weights may have different magnitudes, we shift and rescale the vector $\bar{\w}$ into a hypercube, followed by a unified operation. 
For each weight vector \(\bar{\w}^{(\ell)}\), we apply a shift and scale transformer $\mathtt{ShiftScale}: \mathbb{R}^{D}\rightarrow [c^{(\ell)}-\frac{B_\ell}{2},c^{(\ell)}+\frac{B_\ell}{2}]^D$:
\begin{equation}
    \w^{(\ell)}_i
    =
    \mathtt{ShiftScale}
    \left(
    \bar{\w}^{(\ell)}_i
    \right),
    \label{eq:local_domain_transform}
\end{equation}
where $c^{(l)}$ is the geometric center \(\mathbf{c}^{(\ell)}\)  of all vectors \(\{\bar{\w}^{(\ell)}_i\}_{i=1}^{M_\ell}\), and $B_l$ is the size of the hypercube. Both $c^{(l)}$ and $B_l$ can be set to different values with respect to different layers. Hereafter, we stop using the superscript or subscript $l$ for convenience.

The trajectory $\Phi(\xi)$ generated by a dynamic system is in $[c-\frac{B}{2},c+\frac{B}{2}]^D$. Mathematically, we have for any $\w \in [c-\frac{B}{2},c+\frac{B}{2}]^D$, there exists $\xi^*$ such that
\begin{equation}
    \Vert\Phi(\xi^*)-\w\Vert \leq \epsilon.
\end{equation}
According to this property, for each dynamic system, the compression is essentially realized by 
\begin{equation}
    \min_\xi \Vert \Phi(\xi) - \w\Vert,
    \label{key_equation}
\end{equation}
which turns a vector $\w$ into $\xi$. In real world,  $\xi$ itself is discrete like the space-filling curves; otherwise, we sample $\xi$ into a table and pre-store this table to avoid solving Eq. \eqref{key_equation} analytically. Without doing so, the entire compression will be time-consuming and unable to scale to large models. We denote the discrete values of $\xi$ as $\xi_j$, $j=0,\ldots,S-1$, where \(S\) denotes the number of discrete values. For each \(\w\), we search along $\xi$ for the state $\Phi(\xi)$ that can best represent it. 
Then, we have
\begin{equation}
    \Phi_{\mathcal{T}}(\mathbf{\xi }_j)
    =\w'
    \in [c-\frac{B}{2},c+\frac{B}{2}]^D.
    \label{eq:dynamic_operator}
\end{equation}
Now, we analyze how Eqs. \eqref{key_equation} and \eqref{eq:dynamic_operator} are implemented in a specific dynamic system:
\begin{itemize}
\item \textbf{Space-filling curves}: Including \texttt{Hilbert}, \texttt{Peano}, \texttt{Morton/Z-order}, and \texttt{Snake}, $\mathtt{SFC}$ first quantizes \(\w\) onto a regular lattice.
We set the per-axis grid resolution  \(m\) and the grid-cell side length \(s\) as
\begin{equation}
m=\sqrt[D]{S},\qquad  s=\frac{B}{m}.
\end{equation}
For \(\w\), we divide it by \(s\) to locate its grid cell.
Then \(\w\) is assigned to the geometrical nearest lattice point as
\begin{equation}
\mathbf{w}'
=
\operatorname{clip}
\left(
\left\lfloor
\frac{\mathbf{w}}{s}
\right\rfloor
+
\mathbb{I}
\left[
(\mathbf{w}\bmod s)
\ge
\frac{s}{2}
\right],
0,
m-1
\right),
\label{sfc_integer}
\end{equation}
where \(\mathbb{I}[\cdot]\) is applied element-wise and \(\operatorname{clip}(\cdot,0,m-1)\) ensures that each coordinate remains within the valid lattice-index range.
Then, we can use the explicit relationship according to Supplementary Eqs.~S2,~S5,~S9, and~S12 to obtain $\xi$ from $\w'$. The generated mapping
\begin{equation}
\Phi_{\mathtt{SFC}}: \{0,\ldots,m^D-1\} \rightarrow \{0,\ldots,m-1\}^{D} \end{equation}
is a bijection over the finite lattice. Therefore, the corresponding \(\xi\) is computed by the inverse traversal rule:
\begin{equation}
    \xi 
    =
    \Phi_{\mathtt{SFC}}^{-1}(\mathbf{w}'), 
    ~~
    \mathtt{SFC}
    \in \{\mathrm{Hilbert},\mathrm{Peano},\mathrm{Morton},\mathrm{Snake}\}. \label{eq:grid_inverse_state} 
\end{equation}
For \texttt{Morton/Z-order} and \texttt{Snake} traversal, \(\xi\) can be computed explicitly from the coordinate rules, while for \texttt{Hilbert} and \texttt{Peano} traversals, it is obtained by reversing the corresponding recursive construction.

\item \textbf{Pseudo-random (PR)}: The scalar state \(\xi\) is not obtained by an explicit inverse traversal rule. Instead, we apply Eqs.~\eqref{eq:lcg_recursive} and ~\eqref{eq:pcg_recursive} to generate a state set 
\begin{equation}
    \mathcal{C}_{\mathtt{PR}} 
    =
    \left\{ \Phi_{\mathtt{PR}}(\xi_i) \; 
    \middle| \; i\in(0,\dots,S-1)\right\},
\end{equation}
where $ \mathtt{PR}
    \in
    \{\texttt{LCG},\texttt{PCG}\}$, and 
\(\xi\) is selected by the nearest-neighbor assignment: 
\begin{equation}
    \xi^{*} 
    =
    \arg\min_{\mathcal{C}_{\mathtt{PR}} }
    \left\| 
    \Phi_{\mathtt{PR}}(\xi) 
    - \mathbf{w} 
    \right\| .
\end{equation}
Unlike \texttt{SFC}, this assignment is not an inverse of a lattice bijection, but a nearest-state search over a finite deterministic candidate set.

\item \textbf{QMC}: \(\xi\) indexes a deterministic low-discrepancy point.
Since \texttt{Halton} does not provide a closed-form inverse for \(\mathbf{w}\), we apply Eq.~\eqref{eq:halton} to construct the finite state set:
\begin{equation}
    \mathcal{C}_{\mathrm{QMC}} 
    = 
    \left\{ 
    \Phi_{\mathtt{Halton}}(\xi_i) \; 
    \middle| \; i\in(0,\dots,S-1) 
    \right\}, 
\end{equation}
and \(\xi\) is then computed as 
\begin{equation}
    \xi^{*}
    = 
    \arg\min_{\mathcal{C}_{\mathtt{QMC}} }
    \left\| 
    \Phi_{\mathrm{Halton}}(\xi) - \mathbf{w} \right\| . 
    \label{eq:qmc_inverse_state} 
\end{equation}
This assigns \(\mathbf{w}\) to the nearest index \(\xi^*\).

\item \textbf{Chaos}:
    \(\xi\) is also sampled from a deterministic trajectory rather than from a regular lattice.
    For the \texttt{Lorenz} instantiation, since the trajectory winds around the attractor, we align a circular region instead of the default hypercube.
    We apply Eq.~\eqref{eq:lorenz_projection} to define the scalar-state set as 
    \begin{equation}
    \mathcal{C}_{\mathtt{Chaos}} 
    = 
    \left\{ \Phi_{\mathtt{Chaos}}(\xi_i) \;
    \middle| \; 
    \xi_i\in\left\{ 
    T_{\mathrm{tr}} 
    +
    n\Delta t \right\}_{n=0}^{S-1}
    \right\}. 
    \end{equation} 
    Given \(\mathbf{w}\), we compute \begin{equation}
    \xi^{*} = \arg\min_{\mathcal{C}_{\mathrm{Chaos}}} 
    \left\| 
    \Phi_{\mathrm{Lorenz}}(\xi) - \mathbf{w} \right\| . \label{eq:chaos_inverse_state} 
    \end{equation}
    This assigns \(\mathbf{w}\) to the nearest \(\xi^*\).

\end{itemize}

\textbf{Remark 2}. We can store a table or an explicit relationship between $j$ and $\xi_j$ such that only $j$ needs to be stored:
\[
    j \to \xi_j \to \Phi(\xi_j) \to \w,
\]
thereby $\w$ is compressed into the integer index, which is similar to the classical vector quantization. This can be viewed as a combination of dynamic system+quantization. For \texttt{SFC} systems, compression directly combines lattice quantization with a deterministic dynamical mapping: quantization defines a finite regular domain, while the dynamic system provides the ordering and indexing rule. For \texttt{LCG}, \texttt{PCG}, \texttt{QMC} sequences and chaotic systems, they directly  produce a finite state set, and \(\w\) is represented by the state which is nearest to it. Then, the associated index is stored.

Yet, quantization is not an essential step for realizing compression. Without quantization, we can find a point \(\xi\) such that $\Phi(\xi)$ can represent \(\w\); therefore, compression can also be realized. In conclusion, the essential compression mechanism is the low-dimensional scalar parameters to high-dimensional states.

\subsection{Engineering Techniques}
\label{encoding and reconstrction}
    Here, we introduce our engineering techniques to make the proposed model compression via dynamic systems adapt well in real-world model compression. We mainly deal with two issues: i) how to compress fast so that our method can handle large models with billions of parameters; ii) how to control the representation error for weights to avoid the performance collapse of the compressed model. 
    
    \textbf{Fast compression}: The major computational cost in the proposed framework arises from assigning \(\w\) to its nearest state \(\Phi(\xi)\) (Eq. \eqref{key_equation}), particularly for a large candidate set $\mathcal{C}$.
    To accelerate the computation, we adopt different assignment strategies for different generator families. For the two-dimensional \texttt{Lorenz} system, we construct a KD-tree over all $S$ states in $\mathcal{C}$ once and reuse it across layers. We perform nearest-neighbor queries with an average complexity of $O(\log(S))$ per query.
    
    For \(\mathtt{SFC}\) methods, because their states in $\mathcal{C}$ are integers, Eq.~\eqref{sfc_integer} directly quantizes \(\w\) to an integer lattice coordinate. 
    If we still build a KD-tree, this integer structure will be ignored and introduce additional construction cost.
    Thus, we directly use the state set $\mathcal{C}$ as a template, where each state is associated with a scalar index \(j\). 
    The template is generated once under a canonical local coordinate system and reused across different layers by translation and scaling, if necessary, to align with the hypercube $[c-\frac{B}{2},c+\frac{B}{2}]^D$.
    This reduces both assignment overhead during compression and lookup overhead during decompression, because the stored index can directly refer to the corresponding state.
    
    For \texttt{LCG}, \texttt{PCG} and \texttt{QMC}, we perform exact Euclidean nearest-state assignment over \(S\) candidate states. 
    The canonical candidate set is generated once and reused across layers through translation and scaling.
    To limit memory usage during nearest-state search, the $D$-dimensional grouped vectors are processed in batches, while the candidate states are further scanned in memory-bounded blocks when necessary.
    The minimum distance and corresponding index are updated across all blocks. Thus, the assignment remains exact, while the computational cost remains \(O(SD)\) per grouped vector.

\textbf{Error Control}:
For decompression, we need to restore the weight vector \(\tilde{\w}\) from index \(j\) to:
\[
    j \to \xi_j \to \Phi(\xi_j) \to \w \to \bar{\w} \to \tilde{\w},
\]
The error mainly occurs at $\Phi(\xi_j) \to \w$, and \(\w \to \bar{\w}\) may amplify the error. Since increasing $S$ can lower the error in $\Phi(\xi_j) \to \w$, we let $S$ increase quadratically until the error is acceptable.

For \(\w \to \bar{\w}\), we need to scale it into \([\,\mathbf{c}-\frac{B}{2},\mathbf{c}+\frac{B}{2}\,]^D\). Mathematically, let
\begin{equation}
    \rho_{\max}
    =
    \max_i
    \left\|
    \bar{\mathbf{w}}_i
    -
    \mathbf{c}
    \right\|_{\infty}.
    \label{eq:max_radius}
\end{equation}
Vectors inside
\([\,\mathbf{c}-\frac{B}{2},\mathbf{c}+\frac{B}{2}\,]^D\)
are assigned to the inner class \(k=0\). For the outer vectors, the radial interval
\((B/2,\rho_{\max}]\) is divided into \(K\) different domains, and \(k\) is the class number:
\begin{equation}
    \rho_k
    =
    \frac{B}{2}
    +
    k
    \frac{
    \rho_{\max}-\frac{B}{2}
    }{K},
    \qquad
    k=1,\ldots,K,
    \label{eq:radial_boundary}
\end{equation}
where $K$ is selected dynamically according to an \(\mathtt{MAE}\)-based loss threshold to control the error.
Thus, the closer $\bar{\w}_i$ require only mild scaling, while farther $\bar{\w}_i$ are scaled more strongly. 
Given $\bar{\w}_i$, it is assigned to the smallest class:
\begin{equation}
    \min
    \left\{
    k\in\{1,\ldots,K\}:
    \left\|
    \bar{\mathbf{w}}_i
    -
    \mathbf{c}
    \right\|_{\infty}
    \leq
    \rho_k
    \right\}.
    \label{eq:class_assignment}
\end{equation}
The corresponding scaling factor is
\begin{equation}
    a_{k}
    =
    \frac{B/2}{\rho_{k}} ,
    \label{eq:scaling_factor}
\end{equation}
and the scaled vector is
\begin{equation}
    \mathbf{w}_i
    =
    \mathbf{c}
    +
    a_{k}
    \left(
    \bar{\mathbf{w}}_i
    -
    \mathbf{c}
    \right).
    \label{eq:radial_scaling}
\end{equation} 

Furthermore, we can combine the class information with $j$ by updating \(j\) with the radial class label \(k\) as
\begin{equation}
    j'
    =
    j
    +
    k S ,
    \label{eq:combined_index}
\end{equation}
During decompression, $j'$ is parsed into \(k\) and \(j\) by the following:
\begin{equation}
    k
    =
    \left\lfloor
    j'/S
    \right\rfloor,
    \qquad
    j
    =
    j'
    \bmod
    S .
    \label{eq:index_parse}
\end{equation}
Thus, the required bit width is
\(
\left\lceil
\log_2\big((K+1)S\big)
\right\rceil .
\)

Furthermore, we set a mean-\(\mathtt{MAE}\) threshold $\eta$ for a tensor to control the error.
After encoding $\w$, we decompress it and compute the \(\mathtt{MAE}\) in place.
If the mean \(\mathtt{MAE}\) exceeds the threshold, the corresponding tensor $\w_i$ is stored directly without compression.

\section{Analysis Experiments}
\label{sec:analysis_experiments}

Before formally comparing dynamic system-based compression methods with other state-of-the-arts, here we first conduct analytical experiments to investigate how different hyper-parameters affect the compression--accuracy trade-off, with an emphasis on the grouping dimension and state scale. Then, we systematically compare the compression performance of different dynamic system-based compression methods using the Pareto curves. The analysis experiments can not only enrich the understanding of each method but also help choose representative configurations for them.

\textbf{Experimental Setup}.
\label{subsec:resnet_setup}
We evaluate \texttt{Hilbert}, \texttt{Morton/Z-order}, \texttt{Peano}, \texttt{Snake}, \texttt{Lorenz}, \texttt{LCG}, \texttt{PCG}, and \texttt{QMC/Halton}. The dimension and scale settings are summarized in Table~\ref{tab:evaluated_settings}. Across the eight methods, we evaluate 254 configurations. We use the standard torchvision ResNet-18 pretrained on ImageNet-1K as the backbone.\footnote{
We use the torchvision \texttt{ResNet18\_Weights.IMAGENET1K\_V1} checkpoint, whose official weight file is \texttt{resnet18-f37072fd.pth}: \url{https://download.pytorch.org/models/resnet18-f37072fd.pth}.
}
The original checkpoint occupies 44.7 MB and achieves Top-1 accuracy \(69.76\%\) and Top-5 accuracy \(89.07\%\) on the
50,000-image ILSVRC 2012 validation set ~\cite{russakovsky2015imagenet}.
We follow the standard evaluation protocol by resizing each image to 256 pixels, applying a \(224\times224\) center crop, and using the  normalization recommended here~\cite{torchvision2026resnet18}.
The reconstructed models are evaluated without any test-time augmentation.
All experiments are conducted on servers equipped with NVIDIA L40 and A800 GPUs.

All methods are evaluated under the same tensor-level compression protocol. 
Each tensor is either stored directly or compressed by the corresponding dynamic system. As common practice \cite{lin2024awq}, the buffers, biases, very small tensors, and tensors whose reconstruction error exceeds the error threshold will be considered as outliers and therefore stored directly. For tensors compressed, we save their state indices and the associated metadata for decompression.

\begin{table}[t]
\renewcommand{\arraystretch}{1.4}
\centering
\caption{Evaluated state-generation settings. 
For grid-based mappings, \(2^k\) (or $3^k$ for Peano) denotes the per-axis resolution, yielding $S=2^{kD}$ (or $S=3^{kD}$) system states. 
For trajectory- and sequence-based mappings, $S$ denotes the system states number.}
\label{tab:evaluated_settings}

\renewcommand{\arraystretch}{1.10}
\setlength{\tabcolsep}{3.5pt}
\footnotesize

\begin{tabularx}{\columnwidth}{
>{\raggedright\arraybackslash}p{1.65cm}
>{\centering\arraybackslash}p{0.75cm}
>{\raggedright\arraybackslash}X
}
\hline
\rowcolor{gray!16}
\textbf{Method}
&
\textbf{\(D\)}
&
\textbf{Scale setting}
\\
\hline

\multicolumn{3}{>{\columncolor{lightblue!100}}l}{
\textbf{Space-filling curves and traversal mappings}} \\
\rowcolor{lightblue!60}
\texttt{Hilbert}
&
\(2\)--\(3\)
&
\(S=2^{k\cdot D},\; k\in\{4,5,6,7,8,9,10\}\)
\\
\rowcolor{lightblue!60}
&
\(4\)--\(6\)
&
\(S=2^{k\cdot D},\; k\in\{3,4,5,6,7,8,9,10\}\)
\\
\rowcolor{lightblue!60}
\texttt{Snake}
&
\(2\)--\(3\)
&
\(S=2^{k\cdot D},\; k\in\{4,5,6,7,8,9,10\}\)
\\
\rowcolor{lightblue!60}
&
\(4\)--\(6\)
&
\(S=2^{k\cdot D},\; k\in\{3,4,5,6,7,8,9,10\}\)
\\
\rowcolor{lightblue!60}
\texttt{Z-order}
&
\(2\)--\(3\)
&
\(S=2^{k\cdot D},\; k\in\{4,5,6,7,8,9,10\}\)
\\
\rowcolor{lightblue!60}
&
\(4\)--\(6\)
&
\(S=2^{k\cdot D},\; k\in\{3,4,5,6,7,8,9,10\}\)
\\
\rowcolor{lightblue!60}
\texttt{Peano}
&
\(2\)--\(6\)
&
\(S=3^{k\cdot D},\; k\in\{3,4,5,6\}\)
\\

\hline
\multicolumn{3}{>{\columncolor{blue!10}}l}{\textbf{Chaotic trajectory-based mapping}} \\
\rowcolor{blue!6}
\texttt{Lorenz}
&
\(2\)
&
\(S=2^{18}, 2^{20},2^{22},2^{24}\)
\\

\hline
\multicolumn{3}{>{\columncolor{green!12}}l}{\textbf{Pseudo-random sequence mappings}} \\
\rowcolor{green!6}
\texttt{LCG}
&
\(2\)
&
\(S=2^k,\; k\in\{6,8,10,12,14,16,18\}\)
\\
\rowcolor{green!6}
&
\(3\)
&
\(S=2^k,\; k\in\{8,10,12,14,16,18,20,22\}\)
\\
\rowcolor{green!6}
&
\(4\)
&
\(S=2^k,\; k\in\{12,14,16,18,20,22,24,26\}\)
\\
\rowcolor{green!6}
&
\(5,6\)
&
\(S=2^k,\; k\in\{12,14,16,18,20,22,24,26,28\}\)
\\

\rowcolor{green!6}
\texttt{PCG}
&
\(2\)
&
\(S=2^k,\; k\in\{6,8,10,12,14,16,18\}\)
\\
\rowcolor{green!6}
&
\(3\)
&
\(S=2^k,\; k\in\{8,10,12,14,16,18,20,22\}\)
\\
\rowcolor{green!6}
&
\(4\)
&
\(S=2^k,\; k\in\{12,14,16,18,20,22,24,26\}\)
\\
\rowcolor{green!6}
&
\(5,6\)
&
\(S=2^k,\; k\in\{12,14,16,18,20,22,24,26,28\}\)
\\
\hline
\multicolumn{3}{>{\columncolor{yellow!16}}l}{\textbf{Low-discrepancy sequence mapping}} \\
\rowcolor{yellow!6}
\texttt{QMC}
&
\(2\)
&
\(S=2^k,\; k\in\{6,8,10,12,14,16,18,20\}\)
\\
\rowcolor{yellow!6}
&
\(3\)
&
\(S=2^k,\; k\in\{8,10,12,14,16,18,20,22\}\)
\\
\rowcolor{yellow!6}
&
\(4\)
&
\(S=2^k,\; k\in\{12,14,16,18,20,22,24,26\}\)
\\
\rowcolor{yellow!6}
&
\(5,6\)
&
\(S=2^k,\; k\in\{20,22,24,26,28\}\)
\\

\hline
\end{tabularx}
\vspace{-0.72cm}
\end{table}

\textbf{Overall Compression--Accuracy Behavior}.
\label{subsec:resnet_overall_behavior}
The measured compression ratio is governed by two coupled
factors: the dimensionality ($D$) which is the number of weights to compress at once, and the scale ($S$). We visualize the joint effects of $D$ and $S$ on the measured compression ratio and Top-1 accuracy drop in Figures ~\ref{fig:cr_3d_all} and~\ref{fig:top1drop_3d_all}.

\begin{figure*}[t]
\centering
\includegraphics[width=\textwidth]{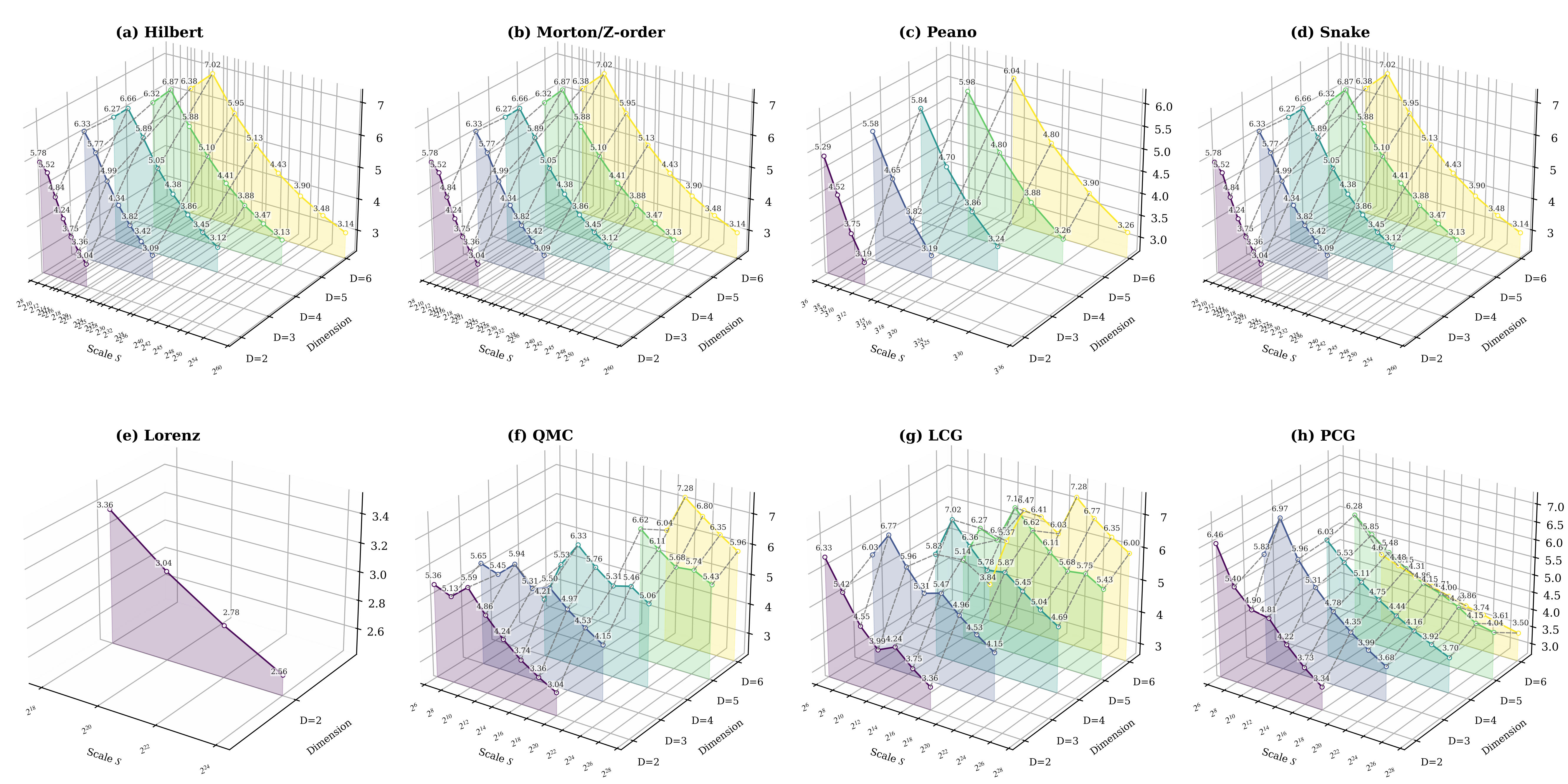}
\caption{
Compression-ratio landscapes of the eight evaluated methods on ResNet-18.
Each panel corresponds to one method, where the horizontal axis denotes the system scale, the depth axis denotes the grouping dimension \(D\), and the vertical axis reports the measured compression ratio.
For each fixed dimension, the curve and translucent curtain connect the evaluated scale configurations, while the numerical labels indicate the corresponding compression ratios.
Higher values represent stronger compression.
Lorenz is evaluated only in the two-dimensional setting because its trajectory states are projected onto a two-dimensional representation.
}
\label{fig:cr_3d_all}
\vspace{-0.5cm}
\end{figure*}

\begin{figure*}[t]
\centering
\includegraphics[width=\textwidth]{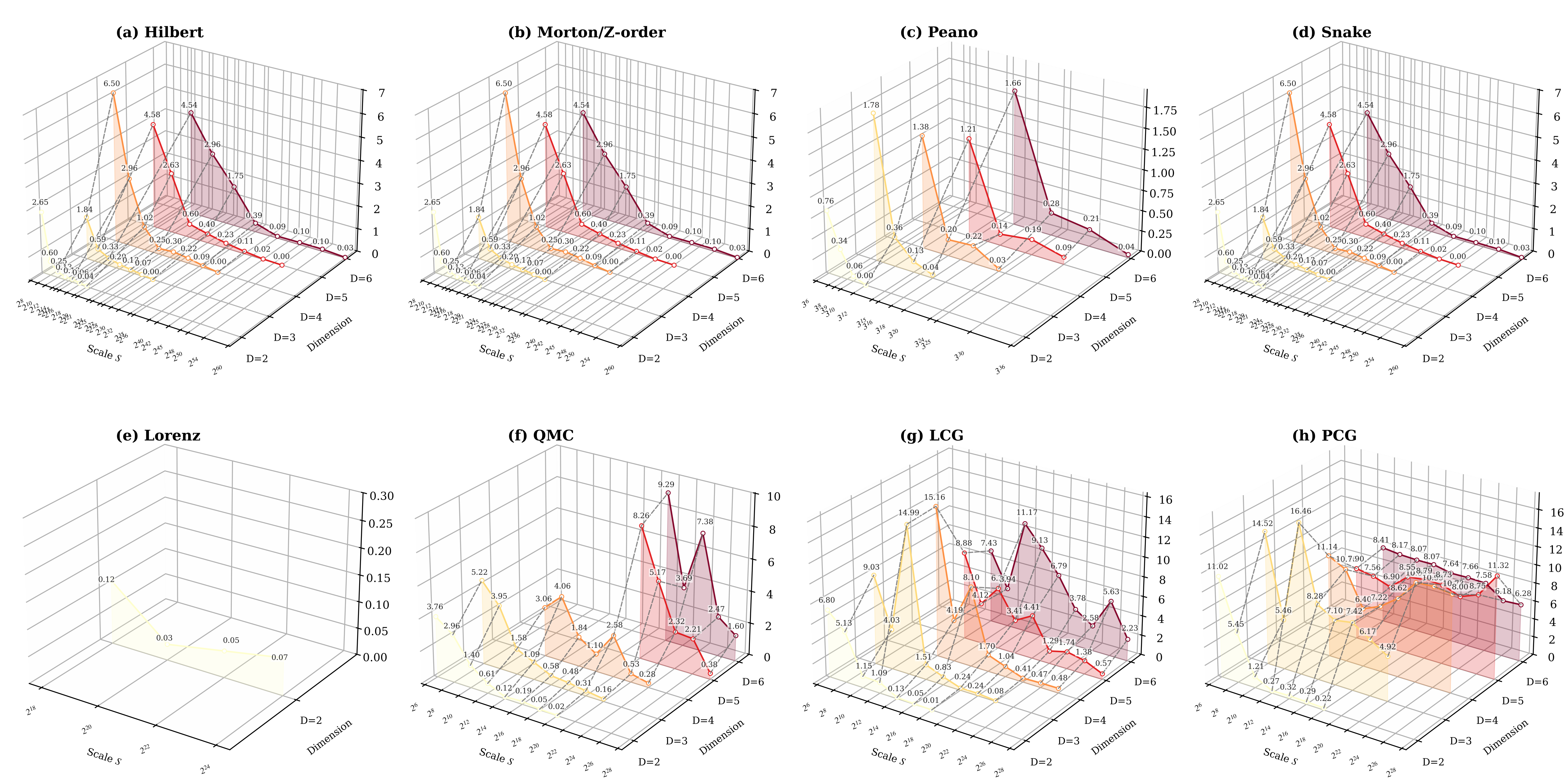}
\caption{
Top-1 accuracy-drop landscapes of the eight evaluated methods on ResNet-18.
The vertical axis reports the Top-1 accuracy drop relative to the original model in percentage points.
Lower values indicate better accuracy preservation.
}
\label{fig:top1drop_3d_all}
\vspace{-0.5cm}
\end{figure*}

\textit{SFC.}
A notable observation from Figures ~\ref{fig:cr_3d_all} and ~\ref{fig:top1drop_3d_all} is that \texttt{Hilbert}, \texttt{Morton/Z-order} and \texttt{Snake} yield identical compression ratios and accuracies under all matched configurations.
For a grouping dimension \(D\) and a per-axis resolution \(m=2^k\), the three methods traverse the same state grid:
\begin{equation}
        \mathcal{C}_{m,D}=\{\Phi_{T}(\xi_i)\}_{i=0}^{m^D-1}
        \quad
        T\in\{\mathrm{Hilbert},\mathrm{Morton},\mathrm{Snake}\}.
\end{equation}
They differ only in the permutation used to assign scalar indices
to the grid points.
Because exact assignment selects the same reconstruction point from \(\mathcal{C}_{m,D}\), the reconstructed tensors and threshold-based fallback decisions remain unchanged.
Although the stored index values differ across traversal orders, they are stored in the same bit width. The resulting compression--accuracy behavior is thus determined by the grid geometry and resolution rather than by traversal order.

At \(D=2,3\), the compression performance shows a consistent and stable trend.
As \(S\) increases, both the loss of Top-1 accuracy and the compression ratio decrease, the latter decrease resulting from the enlarged index width. 
At \(D=4,5,6\), their compression ratios initially increase with per-axis resolution from \(2^3\) to \(2^4\), while the loss of accuracy decreases.
From the per-axis resolution of \(2^4\) onward, the compression performance becomes consistent and stable. The compression ratio and accuracy loss decrease with increasing \(S\).
\(D=4\) achieves the lowest Top-1 accuracy drop of \(0.25\%\) while keeping the compression ratio above \(5\times\).
\(D=5\) achieves the highest compression ratio of \(5.88\times\) constrained by the Top-1 accuracy drop below \(1\%\).

For a fixed \(D\), increasing \(S\) may expand the compressed tensor set, making previously uncompressed tensors eligible for compression.
This reduces fallback storage and
thus can improve the compression ratio. 
Therefore, the compression ratio increases when the reduction in fallback storage outweighs the larger
index cost; otherwise, the compression ratio decreases. 
Newly-compressed tensors may introduce additional error and thus can decrease the accuracy. 
Therefore, the accuracy decreases when the newly introduced error outweighs the overall error improvement gained from larger S; otherwise, the accuracy increases.

Furthermore, the \texttt{Peano} curve follows the same permutation invariance on an identical grid.
In our experiments, however, the \texttt{Peano} curve uses ternary resolutions \(m=3^k\), whereas the \texttt{Hilbert}, \texttt{Morton/Z-order}, and \texttt{Snake} curves use binary resolutions \(m=2^k\).
Both their evaluated grid points and scale do not match exactly. 
The observed differences therefore arise from their different resolution schedules rather than the traversal itself.

When the per-axis resolution is \(2^6\), for the \texttt{Hilbert}, \texttt{snake} and \texttt{Zorder} curves, all dimensions \(D=2\)--\(6\) achieve compression ratios
between \(4.84\)--\(5.13\times\) with a Top-1 drop at most
\(0.40\%\).
This range provides a favorable balance between compression efficiency and accuracy preservation.
For \texttt{Peano}, per-axis resolution of \(3^4\) achieves compression between \(4.52\)--\(4.80\times\) with Top-1 accuracy drops between \(0.14\)--\(0.36\%\).

\textit{Quasi-Monte Carlo}. This method exhibits more irregular and less monotonic compression--accuracy surfaces than the SFC-based methods.

At \(D=2,3\), increasing \(S\) generally prevents the Top-1 accuracy from dropping too much and reduces the compression ratio, resulting in a stable accuracy--compression trade-off.
For example, at \(D=3\), increasing \(S\) from \(2^8\) to \(2^{22}\) reduces the loss of Top-1 accuracy from \(5.22\%\) to \(0.16\%\), with the compression ratio decreasing from \(5.65\times\) to \(4.15\times\). 
\(D=3\) achieves the highest compression ratio of \(5.50\times\) at Top-1 accuracy drop below \(1\%\).

At \(D=4, 6\), the accuracy degradation exhibits larger fluctuations as \(S\) increases, indicating that performance depends not only on the state count, but also on the effective coverage of states and the tensors satisfying the compression error threshold.
\(D=4\) achieves the lowest Top-1 accuracy drop of \(0.28\%\) with the compression ratio above \(5\times\).

At \(D=5\), increasing \(S\) from \(2^{20}\) to \(2^{28}\) reduces the Top-1 accuracy drop from \(8.26\%\) to \(0.38\%\),
while the compression ratio decreases from \(6.62\times\) to \(5.43\times\) because of the larger bit width for indices.

\textit{Congruential and pseudo-random generators.}
\texttt{LCG} exhibits a more irregular response than \texttt{QMC}.
At \(D=2\), increasing \(S\) produces a stable compression performance.
At \(S=2^6\)--\(2^{18}\), the loss of Top-1 accuracy decreases further from \(6.80\%\) to \(0.01\%\), while the compression ratio also gradually decreases from \(6.33\times\) to \(3.36\times\).

At higher dimensions, compression performance fluctuations appear. 
Taking \(D=4\) as an example, as \(S\) increases from \(2^{12}\) to \(2^{18}\), the compression ratio changes from \(5.83\times\) to \(7.02\times\), \(6.36\times\) and \(5.78\times\), while the corresponding accuracy drops are \(15.16\%\), \(4.19\%\), \(8.10\%\) and \(1.70\%\).
From \(2^{12}\) to \(2^{14}\), the reduction in fallback storage outweighs the increase in index cost, resulting in both a higher compression ratio and better accuracy.
At \(S=2^{16}\), the accuracy drop temporarily increases, suggesting that newly-compressed tensors introduce perturbations.
At \(S=2^{18}\), accuracy preservation improves substantially, although the wider indices continue to reduce the compression ratio.

\texttt{LCG} attains its smallest Top-1 drop of \(0.41\%\) at \(D=4\), while keeping the compression ratio above \(5\times\).
Its highest compression ratio is \(5.47\times\) obtained at \(D=3\) with the Top-1 accuracy drop below \(1\%\).

\texttt{PCG} is more sensitive to the grouping dimension.
At \(D=2\), increasing \(S\) from \(2^{10}\) to \(2^{12}\) reduces the loss of Top-1 accuracy from \(1.21\%\) to \(0.27\%\), while the compression ratio decreases slightly from \(4.90\times\) to \(4.81\times\).
For \(S=2^{12}\)--\(2^{18}\), the Top-1 drop remains within \(0.22\)--\(0.32\%\). The compression ratio decreases from \(4.81\times\) to \(3.34\times\).
However, this favorable low-dimensional behavior does not extend to higher dimensions, where the \texttt{PCG} exhibits substantially larger accuracy degradation. \texttt{PCG} achieves the  Top-1 accuracy drop under \(1\%\) only at \(D=2\).

\textit{Lorenz} is evaluated only at \(D=2\), because the trajectory states are projected onto a two-dimensional representation.
Increasing \(S\) from \(2^{18}\) to \(2^{24}\) reduces the
compression ratio from \(3.36\times\) to \(2.56\times\), while the
Top-1 drop remains within \(0.03\)--\(0.12\%\).
Accuracy preservation is already close to saturation at
\(S=2^{18}\), and further extending the trajectory provides little additional accuracy benefit while constantly increasing the index cost.

These results suggest that DS-based compression is fundamentally a state-space approximation problem: a dynamic system defines the geometry of a finite representation space, while the compression quality depends on how effectively this state space covers the structure of model weights.
This explains why different systems with the same number of states can exhibit substantially different compression behaviors, while systems with different indexing rules may produce identical results, such as the \texttt{Hilbert}, \texttt{Morton/Z-order}, and \texttt{Snake} compression performance.
Therefore, effective dynamical representations should prioritize coverage of grouped-weight distributions under a constrained storage budget, rather than simply increasing trajectory complexity or optimizing index traversal.

\begin{table}[t]
\renewcommand{\arraystretch}{1.2}
\centering
\caption{
Representative configurations selected from the method-wise Pareto frontiers on ResNet-18.
CR denotes the measured compression ratio, and \(\Delta\) denotes the accuracy change relative to the original model, which achieves \(69.76\%\) Top-1 and \(89.07\%\) Top-5 accuracy.
}
\label{tab:main_results}
\setlength{\tabcolsep}{4pt}
\begin{tabular}{lcc >
{\columncolor{lightblue!80}}c >{\columncolor{lightyellow!80}}c >{\columncolor{lightyellow!80}}c}
\hline
\textbf{Method}& \textbf{D} & \textbf{S}  & \textbf{CR} & \textbf{Top-1 Acc.} ($\Delta$)\% & \textbf{Top-5 Acc.} ($\Delta$)\%\\
\hline
\texttt{Hilbert}  & 3 & \(2^{15}\) & \underline{\(\textbf{5.77}\)}$\times$ & 69.17 (-0.59) & 88.67 (-0.40)\\
\texttt{Peano} & 5 & \(3^{20}\) & \(4.80\times\) & 69.62 (-0.14) & 88.91 (-0.16) \\
\texttt{Lorenz} & 2 &  \(2^{18}\) & \(3.36\times\) & \underline{\textbf{69.64 (-0.12)}} & \underline{\(\textbf{89.07 (0.00)}\)} \\
\texttt{LCG} & 3 & \(2^{18}\) & \(4.96\times\) & 69.52 (-0.24) & 88.99 (-0.08)\\
\texttt{PCG} & 2 & \(2^{12}\) & \(4.81\times\) & 69.49 (-0.27) & 88.86 (-0.21) \\
\texttt{QMC} & 3 & \(2^{16}\) &\(5.50\times\)& 69.18 (-0.58) & 88.72 (-0.35)\\
\hline
\end{tabular}
\vspace{-0.54cm}
\end{table}

We jointly analyze the performance of the compression ratio and the accuracy drop using global and method-wise Pareto frontiers, which are provided in Supplementary Figs.~S1 and~S2.
Based on the method-wise frontiers, we select one representative configuration for each method and summarize the configurations in Table~\ref{tab:main_results}.
Excluding \texttt{Lorenz}, representative configurations achieve the compression ratio between \(4.80\)--\(5.77\times\) with the accuracy drop between \(0.14\)--\(0.59\%\).
Lorenz achieves a \(3.36\times\) compression ratio with a \(0.12\%\) accuracy drop. 

\section{Comparative Experiments}
\label{sec:comparative_experiments}
 
Through analysis experiments, the characteristics of different dynamic systems and the hyperparameter configurations are probed. For large language models, we compare the proposed dynamic system-based compression with other state-of-the-art compression methods. Experiments show that DS compression methods can deliver competitive performance. 

\textbf{Experimental Setup}.
All six representative configurations in Table~\ref{tab:main_results} are cast for six dynamic systems. 
The benchmark models are the FP16 checkpoints of Qwen2.5-1.5B and Qwen1.5-7B ~\cite{qwen2024qwen25,bai2023qwen}, which account for 3.09 and 15.44 GB in memory, respectively.
We compress all eligible parameter tensors using the proposed methods, while tensors whose decompression errors exceed the prescribed threshold are directly retained in the original form.

Compression quality is evaluated from two complementary perspectives:  language modeling quality and downstream-task performance.
We evaluate perplexity (PPL) on all complete non-overlapping 2048-token blocks of the WikiText-2 raw test split~\cite{merity2017pointer}.
Downstream performance is evaluated on 16 tasks covering knowledge and question answering (SciQ, BoolQ, and MMLU)~\cite{welbl2017sciq,clark2019boolq,hendrycks2021mmlu}, commonsense and causal reasoning
(WinoGrande, HellaSwag, PIQA, and COPA)~\cite{sakaguchi2020winogrande,zellers2019hellaswag,bisk2020piqa,gordon2012copa}, scientific and mathematical
reasoning (ARC-Easy, ARC-Challenge, and GSM8K)~\cite{clark2018arc,cobbe2021gsm8k}, and natural language inference, reading comprehension, and
linguistic analysis (CB, MultiRC, ReCoRD, RTE, WiC, and WSC) from SuperGLUE~\cite{wang2019superglue}.

\textbf{Comparison with Existing Compression Methods}.
Table~\ref{tab:llm_scaling_results} summarizes the performance of six DS-based compression methods on Qwen2.5-1.5B and Qwen1.5-7B.
Across both model scales, all configurations exhibit distinct compression--quality operating points while maintaining perplexity and downstream performance close to the corresponding uncompressed models.

\begin{table}[t]
\renewcommand{\arraystretch}{1.2}
\centering
\caption{
Compression results on Qwen2.5-1.5B and Qwen1.5-7B.
CR denotes the compression ratio.
PPL is evaluated on WikiText-2, and Avg. Score is averaged over 16 downstream tasks.
Values in parentheses denote changes relative to the corresponding uncompressed model.
}
\label{tab:llm_scaling_results}
\setlength{\tabcolsep}{4pt}
\renewcommand{\arraystretch}{1.08}
\begin{tabular}{l l >{\columncolor{lightblue!80}}c >{\columncolor{lightgreen!80}}c >{\columncolor{lightyellow!80}}c}
\hline
\textbf{Model} 
& \textbf{Method} 
& \textbf{CR} 
& \textbf{PPL}($\Delta$)
& \textbf{Avg. Score} ($\Delta$)\%\\
\hline

\multirow{7}{*}{\rotatebox{90}{Qwen2.5-1.5B}}
& Original 
& -- 
& 9.26
& 67.31 \\

& \texttt{Hilbert}
& \(\underline{\textbf{2.67}}\times\) 
& 9.61 (+0.35) 
& 66.52 (-0.79) \\

& \texttt{Peano}
& 2.37$\times$ 
& 9.55 (+0.29) 
& \(\underline{\textbf{67.70 (+0.39) }}\) \\

& \texttt{Lorenz}
& 1.68$\times$
& \(\underline{\textbf{9.29 (+0.03)}}\) 
& 67.31 (+0.00) \\

& \texttt{QMC}
& 2.56$\times$
& 9.75 (+0.49) 
& 66.44 (-0.87) \\

& \texttt{LCG}
& 2.36$\times$
& 9.62 (+0.36) 
& 66.79 (-0.52) \\

& \texttt{PCG}
& 2.33$\times$
& 9.66 (+0.40) 
& 66.24 (-1.07) \\
\hline

\multirow{7}{*}{\rotatebox{90}{Qwen1.5-7B}}
& Original  
& -- 
& 11.30
& 65.49 \\

& \texttt{Hilbert}
& \(\underline{\textbf{3.00}}\times\) 
& 11.81 (+0.51)
& \(\underline{\textbf{66.12 (+0.63)}}\) \\

& \texttt{Peano}
& 2.42$\times$ 
& \(\underline{\textbf{11.30 (+0.00)}}\)
& 64.99 (-0.50) \\

& \texttt{Lorenz}
& 1.68$\times$
& 11.32 (+0.02)
& 65.32 (-0.17) \\

& \texttt{QMC}
& 2.82$\times$
& 11.96 (+0.66) 
& 65.26 (-0.23) \\

& \texttt{LCG}
& 2.52$\times$
& 11.52 (+0.22) 
& 65.00 (-0.49) \\

& \texttt{PCG}
& 2.46$\times$
& 11.63 (+0.33) 
& 65.20 (-0.29) \\
\hline
\end{tabular}
\vspace{-0.2cm}
\end{table}

\begin{figure}[t]
\centering
\includegraphics[width=\linewidth]{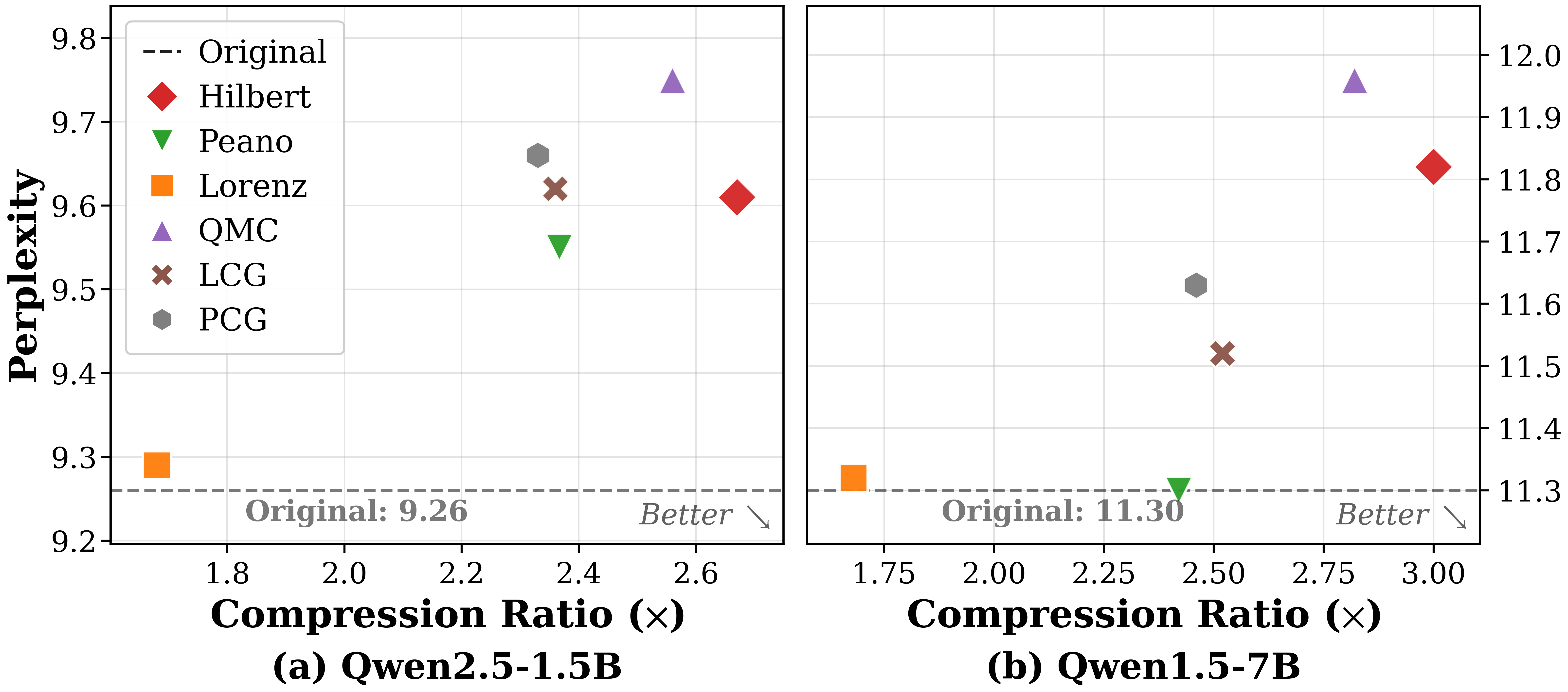}
\caption{
Compression--PPL trade-offs on Qwen2.5-1.5B and Qwen1.5-7B.
Dashed lines denote the PPL of the corresponding uncompressed models.
Higher compression ratios and lower perplexity indicate better trade-offs.
}
\label{fig:llm_crppl}
\vspace{-0.5cm}
\end{figure}

Figure~\ref{fig:llm_crppl} further visualizes the compression-PPL trade-offs, showing distinct operating points across the six dynamic system configurations.
$\mathtt{Lorenz}$ preserves near-baseline perplexity, with only 0.03 and 0.02 PPL increases on Qwen2.5-1.5B and Qwen1.5-7B, respectively, while maintaining a conservative compression ratio of $1.68\times$.
In contrast, $\mathtt{Hilbert}$ achieves the highest compression ratios among the evaluated configurations, reaching $2.67\times$ and $3.00\times$ on the two models with PPL increases of $0.35$ and $0.51$.
Other configurations occupy intermediate positions, providing different compression-quality operating points. In particular, \texttt{QMC} reaches relatively high compression ratios of $2.56\times$ and $2.82\times$, but is accompanied by the largest PPL increases at $0.49$ and $0.66$, respectively. This consistent trend indicates that the proposed DS method scales beyond convolutional networks to billion-parameter language models.
Furthermore, the average downstream scores also remain close to those of the corresponding uncompressed models, indicating that the obtained trade-offs are not subjected to systematic degradation in
aggregate downstream tasks.

Among six configurations evaluated above, \texttt{Hilbert} achieves the highest compression ratio on two models.
We therefore select it for comparison with existing LLM compression methods. 
We compare against the irrational-winding reference (IR)
\cite{fan2026hypercompression},
weight-only configurations of conventional low-bit post-training quantization methods including
ZeroQuant~\cite{yao2022zeroquant}, SQuant~\cite{guo2022squant}, and vanilla PTQ,
as well as representative weight-only methods including
GPTQ~\cite{frantar2023gptq},
AWQ~\cite{lin2024awq},
SpQR~\cite{dettmers2024spqr}, and QuIP~\cite{chee2023quip}.
All compared configurations compress model weights while keeping activations and KV caches at FP16 precision.
For a consistent comparison, all methods are evaluated on the same model checkpoints and the same evaluation protocols. Compression ratios are computed using the same storage accounting rule.

\begin{figure}[t]
\centering
\includegraphics[width=\linewidth]{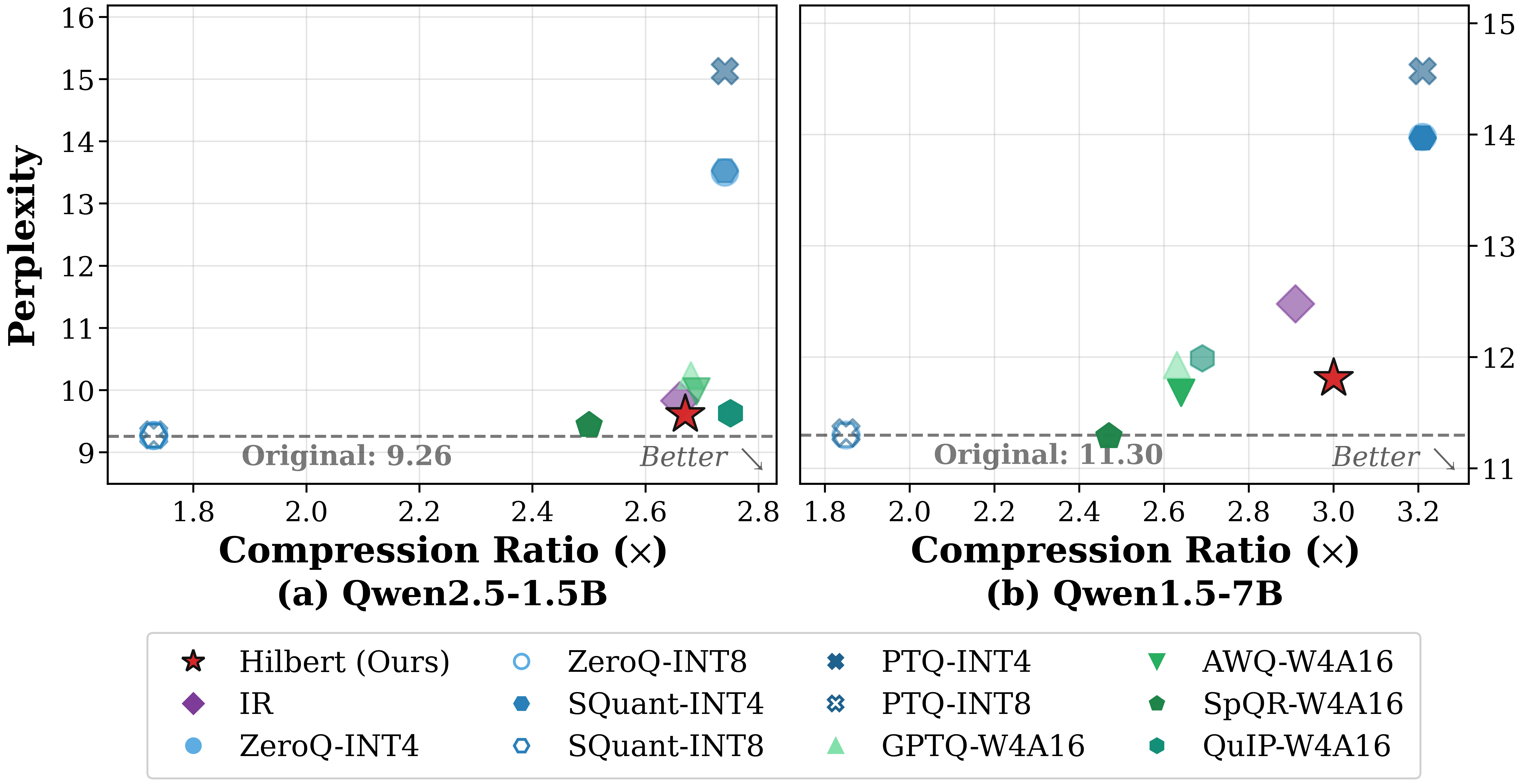}
\caption{
Compression--perplexity trade-offs of weight-only compression methods on Qwen2.5-1.5B and Qwen1.5-7B.
Higher CR and lower PPL indicate better trade-offs.
}
\label{fig:llm_compression_pareto}
\vspace{-0.5cm}
\end{figure}

\begin{table*}[t]
\centering
\caption{
Compression and downstream-task comparison on Qwen2.5-1.5B.
CR denotes the compression ratio, PPL is evaluated on WikiText-2. 
Task scores are reported in percentage (\%).
The best downstream task score is highlighted in bold.
Avg. Score is averaged over the 16 downstream tasks.
ZQ, SQ, and PTQ denote ZeroQuant, SQuant, and vanilla PTQ, respectively.
}
\label{tab:qwen15b_downstream_comparison}

\footnotesize
\setlength{\tabcolsep}{3.0pt}
\renewcommand{\arraystretch}{1.05}

\begin{tabular}{l>{\columncolor{lightgray!20}}c>{\columncolor{lightgray!20}}c >{\columncolor{lightblue!80}}c >
{\columncolor{lightyellow!80}}c>{\columncolor{lightyellow!80}}c>{\columncolor{lightyellow!80}}c>{\columncolor{lightyellow!80}}c>{\columncolor{lightyellow!80}}c>{\columncolor{lightyellow!80}}c>{\columncolor{lightgreen!80}}c>
{\columncolor{lightgreen!80}}c>{\columncolor{lightgreen!80}}c>{\columncolor{lightgreen!80}}c}

\toprule

& \multicolumn{1}{c}{Reference}
& \multicolumn{1}{c}{Dyn-Sys}
& \multicolumn{1}{c}{Previous}
& \multicolumn{6}{c}{Conventional PTQ}
& \multicolumn{4}{c}{Advanced Weight-only}
\\
\cmidrule(lr){2-2}
\cmidrule(lr){3-3}
\cmidrule(lr){4-4}
\cmidrule(lr){5-10}
\cmidrule(lr){11-14}

Metric / Task
& Original
& \texttt{Hilbert}
& IR
& ZQ4
& ZQ8
& SQ4
& SQ8
& PTQ4
& PTQ8
& GPTQ
& AWQ
& SpQR
& QuIP
\\

\hline
\textbf{CR}
& --
& 2.67$\times$
& 2.67$\times$
& 2.74$\times$
& 1.73$\times$
& 2.74$\times$
& 1.73$\times$
& 2.74$\times$
& 1.73$\times$
& 2.68$\times$
& 2.69$\times$
& 2.50$\times$
& 2.75$\times$
\\

\textbf{PPL}
& 9.26
& 9.61
& 9.84
& 13.50
& 9.27
& 13.53
& 9.27
& 15.14
& 9.28
& 10.24
& 9.98
& 9.45
& 9.63
\\

\textbf{($\Delta$ PPL)}
& --
& +0.35
& +0.58
& +4.24
& +0.01
& +4.27
& +0.01
& +5.88
& +0.02
& +0.98
& +0.72
& +0.19
& +0.37
\\

SciQ
& 93.30
& \textbf{94.20}
& 93.70
& 89.60
& 93.40
& 89.60
& 93.40
& 86.30
& 93.30
& 93.70
& 93.80
& 93.40
& 93.40
\\

WinoGrande
& 63.38
& 62.19
& \textbf{64.40}
& 58.17
& 63.93
& 59.75
& 63.69
& 59.27
& 63.38
& 62.51
& 63.85
& 63.54
& 64.25
\\

ARC-Easy
& 71.51
& 72.85
& 72.60
& 66.20
& 71.04
& 67.38
& 71.34
& 58.71
& 71.55
& \textbf{73.36}
& 72.64
& 71.55
& 73.19
\\

ARC-Challenge
& 54.78
& 52.39
& 53.33
& 45.14
& \textbf{54.86}
& 45.05
& 54.61
& 43.52
& 54.61
& 51.96
& 51.96
& 54.18
& 51.88
\\

HellaSwag
& 67.95
& 67.40
& 66.63
& 61.27
& 67.92
& 61.39
& \textbf{67.95}
& 59.85
& 67.86
& 66.72
& 66.29
& 67.42
& 66.39
\\

BoolQ
& 78.84
& 78.75
& 77.86
& 65.69
& 78.81
& 64.98
& 78.75
& 43.09
& \textbf{78.96}
& 76.27
& 78.10
& 77.16
& 77.71
\\

MMLU
& 60.95
& 60.06
& 59.74
& 50.81
& \textbf{61.00}
& 50.22
& 60.85
& 49.91
& 60.90
& 57.39
& 58.28
& 60.20
& 59.07
\\

GSM8K
& 62.62
& 59.14
& 54.21
& 16.38
& 62.02
& 15.69
& 61.56
& 12.59
& \textbf{62.93}
& 46.55
& 53.37
& 59.06
& 51.71
\\

PIQA
& 75.95
& \textbf{76.33}
& 75.79
& 74.10
& 76.01
& 74.10
& 75.90
& 72.63
& 76.12
& 75.41
& 75.68
& 75.46
& 74.70
\\

CB
& 71.43
& 64.29
& 67.86
& 50.00
& \textbf{73.21}
& 46.43
& \textbf{73.21}
& 41.07
& 67.86
& 64.29
& 60.71
& 58.93
& 46.43
\\

COPA
& 83.00
& \textbf{85.00}
& 84.00
& 81.00
& 83.00
& 79.00
& 83.00
& 77.00
& 83.00
& 83.00
& 83.00
& 82.00
& 82.00
\\

MultiRC
& 28.55
& 30.38
& 42.06
& 40.18
& 28.24
& 41.30
& 28.28
& 44.66
& 29.08
& \textbf{48.41}
& 40.45
& 41.63
& 35.25
\\

ReCoRD
& 84.83
& 83.90
& 84.06
& 78.95
& 84.75
& 78.55
& 84.78
& 75.39
& \textbf{84.86}
& 83.27
& 83.47
& 84.38
& 84.29
\\

RTE
& 70.04
& 64.98
& 67.51
& 54.51
& 70.76
& 53.79
& 69.68
& 59.57
& 68.23
& \textbf{71.12}
& 68.59
& 63.90
& 64.62
\\

WiC
& 53.13
& 53.76
& \textbf{58.78}
& 50.78
& 53.13
& 51.88
& 53.13
& 51.88
& 53.45
& 57.37
& 51.88
& 57.05
& 54.86
\\

WSC
& 56.73
& 58.65
& 53.85
& \textbf{63.46}
& 58.65
& \textbf{63.46}
& 57.69
& \textbf{63.46}
& 57.69
& 36.54
& 47.12
& 56.73
& 57.69
\\
\textbf{Avg. Score}
& 67.31
& \shortstack{66.52}
& \shortstack{67.27}
& \shortstack{59.14}
& \shortstack{67.55}
& \shortstack{58.91}
& \shortstack{67.36}
& \shortstack{56.18}
& \shortstack{67.11}
& \shortstack{65.49}
& \shortstack{65.57}
& \shortstack{66.66}
& \shortstack{64.84}
\\
\textbf{($\Delta$ Avg)}
& --
& \shortstack{(-0.79)}
& \shortstack{(-0.04)}
& \shortstack{(-8.17)}
& \shortstack{(+0.24)}
& \shortstack{(-8.40)}
& \shortstack{(+0.05)}
& \shortstack{(-11.13)}
& \shortstack{(-0.20)}
& \shortstack{(-1.82)}
& \shortstack{(-1.74)}
& \shortstack{(-0.65)}
& \shortstack{(-2.47)}
\\

\bottomrule
\end{tabular}
\vspace{-0.5cm}
\end{table*}

\begin{table*}[t]
\centering
\caption{
Compression and downstream-task comparison on Qwen1.5-7B.
Metrics, notation, and highlighting follow
Table~\ref{tab:qwen15b_downstream_comparison}.
}
\label{tab:qwen7b_downstream_comparison}

\footnotesize

\setlength{\tabcolsep}{3.0pt}
\renewcommand{\arraystretch}{1.05}

\begin{tabular}{l>{\columncolor{lightgray!20}}c>{\columncolor{lightgray!20}}c >{\columncolor{lightblue!80}}c >
{\columncolor{lightyellow!80}}c>{\columncolor{lightyellow!80}}c>{\columncolor{lightyellow!80}}c>{\columncolor{lightyellow!80}}c>{\columncolor{lightyellow!80}}c>{\columncolor{lightyellow!80}}c>{\columncolor{lightgreen!80}}c>
{\columncolor{lightgreen!80}}c>{\columncolor{lightgreen!80}}c>{\columncolor{lightgreen!80}}c}
\toprule

& \multicolumn{1}{c}{Reference}
& \multicolumn{1}{c}{Dyn-Sys}
& \multicolumn{1}{c}{Previous}
& \multicolumn{6}{c}{Conventional PTQ}
& \multicolumn{4}{c}{Advanced Weight-only}
\\
\cmidrule(lr){2-2}
\cmidrule(lr){3-3}
\cmidrule(lr){4-4}
\cmidrule(lr){5-10}
\cmidrule(lr){11-14}

Metric / Task
& Original
& \texttt{Hilbert}
& IR
& ZQ4
& ZQ8
& SQ4
& SQ8
& PTQ4
& PTQ8
& GPTQ
& AWQ
& SpQR
& QuIP
\\

\hline

\textbf{CR}
& --
& 3.00$\times$
& 2.91$\times$
& 3.21$\times$
& 1.85$\times$
& 3.21$\times$
& 1.85$\times$
& 3.21$\times$
& 1.85$\times$
& 2.63$\times$
& 2.64$\times$
& 2.47$\times$
& 2.69$\times$
\\

\textbf{PPL}
& 11.30
& 11.81
& 12.47
& 13.98
& 11.30
& 13.97
& 11.30
& 14.57
& 11.31
& 11.92
& 11.68
& 11.29
& 11.99
\\

\textbf{($\Delta$ PPL)}
& --
& +0.51
& +1.17
& +2.68
& +0.00
& +2.67
& +0.00
& +3.27
& +0.01
& +0.62
& +0.38
& -0.01
& +0.69
\\

SciQ
& 83.40
& \textbf{85.30}
& 82.70
& 79.50
& 83.80
& 80.20
& 83.80
& 80.80
& 83.90
& 79.00
& 84.80
& 83.10
& 83.90
\\

WinoGrande
& 65.19
& 64.17
& 62.75
& 61.72
& 64.96
& 62.12
& 65.19
& 61.33
& 65.27
& 65.19
& 65.19
& 65.75
& \textbf{66.06}
\\

ARC-Easy
& 63.34
& 61.62
& \textbf{63.68}
& 61.36
& 63.30
& 61.03
& 63.38
& 60.52
& 63.59
& 58.04
& 62.37
& 62.96
& 60.90
\\

ARC-Challenge
& 56.57
& 53.50
& 55.63
& 48.72
& 55.80
& 48.21
& 55.55
& 51.88
& \textbf{56.06}
& 55.12
& 55.46
& 55.97
& 54.69
\\

HellaSwag
& 78.68
& 77.63
& 76.62
& 75.73
& 78.72
& 75.77
& \textbf{78.73}
& 75.16
& 78.64
& 77.66
& 77.74
& 78.36
& 77.76
\\

BoolQ
& 85.47
& 84.80
& 85.08
& 83.70
& 85.32
& 83.88
& \textbf{85.38}
& 84.50
& 85.35
& 84.68
& 85.11
& 85.11
& 85.17
\\

MMLU
& 60.45
& 59.69
& 58.87
& 57.56
& \textbf{60.62}
& 57.62
& 60.58
& 56.24
& 60.53
& 59.72
& 59.97
& 60.13
& 59.68
\\

GSM8K
& 20.92
& 22.14
& 19.26
& 23.43
& 23.35
& 25.17
& 23.05
& 22.59
& 19.48
& \textbf{41.62}
& 23.43
& 18.50
& 25.93
\\

PIQA
& 75.90
& 75.79
& 75.24
& 74.16
& 75.90
& 74.05
& 75.90
& 73.45
& 75.79
& 74.59
& 75.35
& \textbf{76.06}
& 75.46
\\

CB
& 60.71
& 73.21
& 35.71
& \textbf{83.93}
& 60.71
& \textbf{83.93}
& 60.71
& 50.00
& 60.71
& 64.29
& 57.14
& 58.93
& 57.14
\\

COPA
& 86.00
& 86.00
& 86.00
& 78.00
& 86.00
& 81.00
& 84.00
& 81.00
& \textbf{87.00}
& \textbf{87.00}
& 85.00
& 85.00
& 86.00
\\

MultiRC
& 25.68
& 26.88
& 23.10
& \textbf{35.11}
& 24.96
& 34.36
& 25.00
& 31.83
& 25.45
& 28.44
& 29.19
& 25.74
& 23.41
\\

ReCoRD
& 81.06
& 80.66
& 80.44
& 79.57
& 81.21
& 79.90
& 81.20
& 79.40
& 81.11
& 80.20
& 81.07
& \textbf{81.31}
& 80.66
\\

RTE
& 83.03
& 82.67
& 85.20
& 82.67
& 83.39
& 83.39
& 83.39
& 80.51
& 83.75
& 83.75
& 83.39
& 82.31
& \textbf{85.56}
\\

WiC
& 67.55
& 67.08
& 67.08
& 61.44
& 67.08
& 63.64
& 67.08
& 64.11
& 67.40
& 66.46
& 66.61
& \textbf{67.71}
& 65.05
\\

WSC
& 53.85
& \textbf{56.73}
& \textbf{56.73}
& 48.08
& 55.77
& 49.04
& 55.77
& 45.19
& 55.77
& \textbf{56.73}
& 54.81
& 54.81
& 53.85
\\
\textbf{Avg. Score}
& 65.49
& 66.12
& 63.38
& 64.67
& 65.68
& 65.21
& 65.54
& 62.41
& 65.61
& 66.41
& 65.41
& 65.11
& 65.08
\\

\textbf{($\Delta$ Avg)}
& --
& +0.63
& -2.11
& -0.82
& +0.19
& -0.28
& +0.05
& -3.08
& +0.12
& +0.92
& -0.08
& -0.38
& -0.41
\\

\bottomrule
\end{tabular}
\vspace{-0.5cm}
\end{table*}

Table~\ref{tab:qwen15b_downstream_comparison}, Table~\ref{tab:qwen7b_downstream_comparison} and
Figure~\ref{fig:llm_compression_pareto}
show that \texttt{Hilbert} occupies a competitive operating point in the high-compression region across both models.
On Qwen2.5-1.5B, \texttt{Hilbert} achieves a compression ratio of
$2.67\times$ with a PPL increase of only $0.35$.
GPTQ and AWQ reach close matched compression ratios: $2.68\times$ and $2.69\times$, respectively.
However, their PPL increases are $0.98$ and $0.72$, compared with $0.35$ for \texttt{Hilbert}.
Likewise, ZeroQuant-INT4, SQuant-INT4, and PTQ-INT4 reach $2.74\times$ compression but increase PPL by $4.24$, $4.27$,
and $5.88$, respectively.
Compared with IR, \texttt{Hilbert} achieves the nearly identical compression ratio while reducing the PPL increase from $0.58$ to $0.35$.
SpQR provides a lower PPL increase of $0.19$ at a lower compression ratio of $2.50\times$, whereas QuIP reaches a slightly higher compression ratio of $2.75\times$ with a similar PPL increase of $0.37$.
These results place \texttt{Hilbert} at a competitive operating point in the high-compression region of the weight-only comparison.

On Qwen1.5-7B, \texttt{Hilbert} reaches $3.00\times$ compression with a PPL increase of $0.51$.
ZeroQuant-INT4, SQuant-INT4, and PTQ-INT4 achieve somewhat higher compression at $3.21\times$ but incur larger PPL increases of $2.68$, $2.67$ and $3.27$, respectively.
Compared with IR, \texttt{Hilbert} increases the compression ratio from $2.91\times$ to $3.00\times$ while reducing the PPL increase from $1.17$ to $0.51$.
Other methods provide different operating points: AWQ achieves a smaller
PPL increase of $0.38$ at $2.64\times$, while SpQR retains nearly unchanged
perplexity at $2.47\times$ compression.
Relative to these lower-compression operating points, \texttt{Hilbert}
achieves a higher compression ratio of $3.00\times$ at the cost of a
moderate increase in perplexity.

The average downstream results further provide a complementary view of compression quality.
\texttt{Hilbert} obtains average scores of $66.52$ and $66.12$ on Qwen2.5-1.5B and Qwen1.5-7B, corresponding to changes of
$-0.79$ and $+0.63$ points, respectively.
On Qwen1.5-7B, its average score remains higher than those of the three conventional INT4 PTQ baselines despite their somewhat higher compression ratios.
Overall, these results indicate that \texttt{Hilbert} maintains competitive aggregate downstream performance while operating in the high-compression region.

Tables~\ref{tab:qwen15b_downstream_comparison} and ~\ref{tab:qwen7b_downstream_comparison} also provide a more detailed view of compression effects on downstream-task performance. 
For \texttt{Hilbert}, performance remains relatively stable on several broad-coverage benchmarks, including HellaSwag, BoolQ, MMLU, PIQA, and ReCoRD.
On Qwen2.5-1.5B, the score changes on these tasks remain within approximately one percentage point, while on Qwen1.5-7B the same group of tasks exhibits similarly moderate variations.

Meanwhile, the task-wise results reveal substantially larger method-dependent variations on several benchmarks.
GSM8K, CB, MultiRC, and WSC show noticeably larger fluctuations across compression approaches and model scales, with both positive and negative changes observed after compression.
For example, GPTQ obtains a substantial improvement on GSM8K for Qwen1.5-7B, whereas its behaviors on several other tasks do not show the same trend.
Similarly, CB and MultiRC exhibit large variations across multiple quantization families.
These task-dependent fluctuations further show that changes in PPL are not necessarily aligned with changes in individual downstream-task scores.
Consequently, isolated score improvements after compression should not be
interpreted as a consistent gain in model capability.
Overall, \texttt{Hilbert} maintains competitive task-level performance.

\section{Conclusion}
\label{sec:conclusion}

This paper has investigated model compression through the lens of dynamic systems. We first established a theoretical foundation for DS-based compression with the irrational winding, thereby transforming DS-based compression from a heuristic construction into a principled tool with a controllable error profile. We have then extended the paradigm from the single irrational-winding instantiation to a unified framework based on DS. To make the framework practical at scale, we have introduced KD-tree and coordinate-template acceleration for fast state assignment, together with outlier identification and threshold-based fallback for error control. Experiments confirm that DS-based compression is competitive. Several open questions remain for future work, including automatic selection of the dynamic system and its configuration and a more complete characterization of computational overhead during both compression and deployment. We hope this work stimulates further research at the intersection of dynamic systems and efficient deep learning.

\bibliographystyle{IEEEtran}
\bibliography{references}

\end{document}


\renewcommand{\thetable}{S\arabic{table}}
\renewcommand{\thefigure}{S\arabic{figure}}
\renewcommand{\theequation}{S\arabic{equation}}

\title{
Supplementary Materials of ``Dynamics as Code: On Model Compression via Dynamic System''
}

\author{
Fan Gao, Wei Su*, Juntong Fan*, Renfeng Peng, Hongyu Liu, Jinqiao Duan, Feng-Lei Fan
}
\maketitle
\vspace{0.8em}

\begin{IEEEkeywords}
Model compression, dynamic systems, neural networks, parameter representation.
\end{IEEEkeywords}

\section{Detailed Space-Filling curve Constructions}
\label{subsec:mapping_families}
This section provides the detailed index-to-coordinate constructions of the four space-filling mappings used in the main manuscript. All four mappings define deterministic bijections between
a scalar index and a finite \(D\)-dimensional lattice. 
The main manuscript retains their common formulation and compression role,
while the explicit traversal rules are given below for completeness.
\subsection{\texttt{Hilbert} Traversal}
For the \texttt{Hilbert} traversal, let \(m=2^k\). For a scalar index \(\xi\in\{0,\ldots,2^{kD}-1\}\), we define \begin{equation}
    h_0 
    = 
\left\lfloor 
\frac{\xi}{2^{(k-1)D}} 
\right\rfloor, \qquad 
    \xi' = \xi\bmod 2^{(k-1)D}, 
\end{equation}
    where \(h_0\in\{0,\ldots,2^D-1\}\) denotes the first-level subcube index, and \(\xi'\) is the remaining index at the next recursion level. 
The \texttt{Hilbert} lattice state can be written abstractly as
\begin{equation} 
    \Phi_{\mathrm{Hilbert}}^{(D,k)}(\xi)
    =
    2^{k-1}\mathbf{v}_{h_0}
    +
    \mathcal{R}_{h_0}^{(k-1)}
    \left(
    \Phi_{\mathrm{Hilbert}}^{(D,k-1)}(\xi')
    \right),
    \label{eq:hilbert_recursive} 
\end{equation}
    with \(\Phi_{\mathrm{Hilbert}}^{(D,0)}(0)=\mathbf{0}\). 
    Here, \(\mathbf{v}_{h_0}\in\{0,1\}^{D}\) denotes the subcube position, and \(\mathcal{R}_{h_0}^{(k-1)}\) reorients the lower-order Hilbert pattern within the subcube via coordinate permutations and axis reflections to preserve continuity and locality.
    Thus, 
    \begin{equation} 
    \Phi_{\mathrm{Hilbert}}^{(D,k)}(\xi)
    \in
    \{0,\ldots,2^k-1\}^{D}.
    \end{equation}
    
\subsection{\texttt{Peano} Traversal}
    For the \texttt{Peano} traversal, let \(m=3^k\). The scalar index \(\xi\in\{0,\ldots,3^{kD}-1\}\) is defined as 
\begin{equation} 
    p_0 
    = 
    \left\lfloor 
    \frac{\xi}{3^{(k-1)D}} 
    \right\rfloor, 
    \quad 
    \xi' 
    =
    \xi \bmod 3^{(k-1)D}, 
\end{equation}
    where \(p_0\in\{0,\ldots,3^D-1\}\) denotes the first-level subblock index.The corresponding lattice \texttt{Peano} state is written recursively as
    \begin{equation} 
    \Phi_{\mathrm{Peano}}^{(D,k)}(\xi)
    =
    3^{k-1}\mathbf{v}_{p_0}
    +
    \mathcal{R}_{p_0}^{(k-1)}
    \left(
    \Phi_{\mathrm{Peano}}^{(D,k-1)}(\xi')
    \right),
    \label{eq:peano_recursive} 
\end{equation}
    with \(\Phi_{\mathrm{Peano}}^{(D,0)}(0)=\mathbf{0}\). 
    Here, \(\mathbf{v}_{p_0}\in\{0,1,2\}^{D}\) denotes the geometric position of the \(p_0\)-th first-level subblock, and \(\mathcal{R}_{p_0}^{(k-1)}\) denotes the corresponding orientation operation. 
    Thus, 
    \begin{equation} 
    \Phi_{\mathrm{Peano}}^{(D,k)}(\xi)
    \in
    \{0,\ldots,3^k-1\}^{D}.
    \end{equation}

\subsection{\texttt{Morton/Z-order} Traversal}
   \texttt{Morton/Z-order} uses bit interleaving, providing a simpler indexing rule but weaker locality near block boundaries. Let \(m=2^k\) be the grid resolution per-axis, where \(k\) is the number of bits used for each coordinate. For a scalar index \(\xi\in\{0,\ldots,m^D-1\}\), we write its binary representation as
    \begin{equation}
    \xi = \sum_{r=0}^{kD-1} b_r(\xi)2^r, 
    \qquad 
    b_r(\xi)\in\{0,1\}.
    \end{equation}
   Under the \texttt{Morton} bit-interleaving convention, the \(d\)-th coordinate is obtained by collecting every \(D\)-th bit:
    \begin{equation} 
    q_d(\xi) 
    = 
    \sum_{r=0}^{k-1} b_{rD+d}(\xi)2^r, 
    \quad d=0,\ldots,D-1 .
    \label{eq:morton_mapping} 
    \end{equation}
    Accordingly, the state generated by the \texttt{Morton/Z-order} system is
\begin{equation} 
    \Phi_{\mathrm{Morton}}^{(D,k)}(\xi)
    =
    \bigl(
    q_0(\xi),
    \ldots,
    q_{D-1}(\xi)
    \bigr)
    \in
    \{0,\ldots,m-1\}^{D}.
    \label{eq:morton_recursive}
\end{equation}

\subsection{\texttt{Snake} Traversal}
    \texttt{Snake} traversal follows a regular alternating scan pattern and provides a low-complexity grid-ordering rule.
    Let \(m=2^k\) denote the per-axis grid resolution. 
    For a scalar index \(\xi\in\{0,\ldots,m^D-1\}\), we first obtain the index coordinates \(\mathbf{c}(\xi)=\bigl(c_1(\xi),\ldots,c_D(\xi)\bigr)\), where
    \begin{equation}
    \xi
    =
    \sum_{j=1}^{D}c_j(\xi)m^{j-1},
    \qquad
    c_j(\xi)
    =
    \left\lfloor
    \frac{\xi}{m^{j-1}}
    \right\rfloor
    \bmod m.
    \end{equation}
    For \(j=1,\ldots,D-1\), the scan direction along the coordinate \(j\) is determined by the parity of the next higher-order coordinate:
    \begin{equation}
    c_j'(\xi)
    =
    \begin{cases}
    c_j(\xi),
    &
    \displaystyle
    c_{j+1}(\xi)\equiv0\pmod2,
    \\[1pt]
    m-1-c_j(\xi),
    &
    \displaystyle
    c_{j+1}(\xi)\equiv1\pmod2,
    \end{cases}
    \label{eq:snake_recursive}
    \end{equation}
and \(c_D'(\xi)=c_D(\xi)\). Thus,  
    \begin{equation}
    \Phi_{\mathrm{Snake}}^{(D,k)}(\xi)
    =
    \bigl(
    c_1'(\xi),
    \ldots,
    c_{D}'(\xi)
    \bigr)
    \in \{0,\ldots,m-1\}^{D}.
    \label{eq:snake_state}
    \end{equation} 

\begin{figure*}[!t]
\centering
\includegraphics[width=\textwidth]{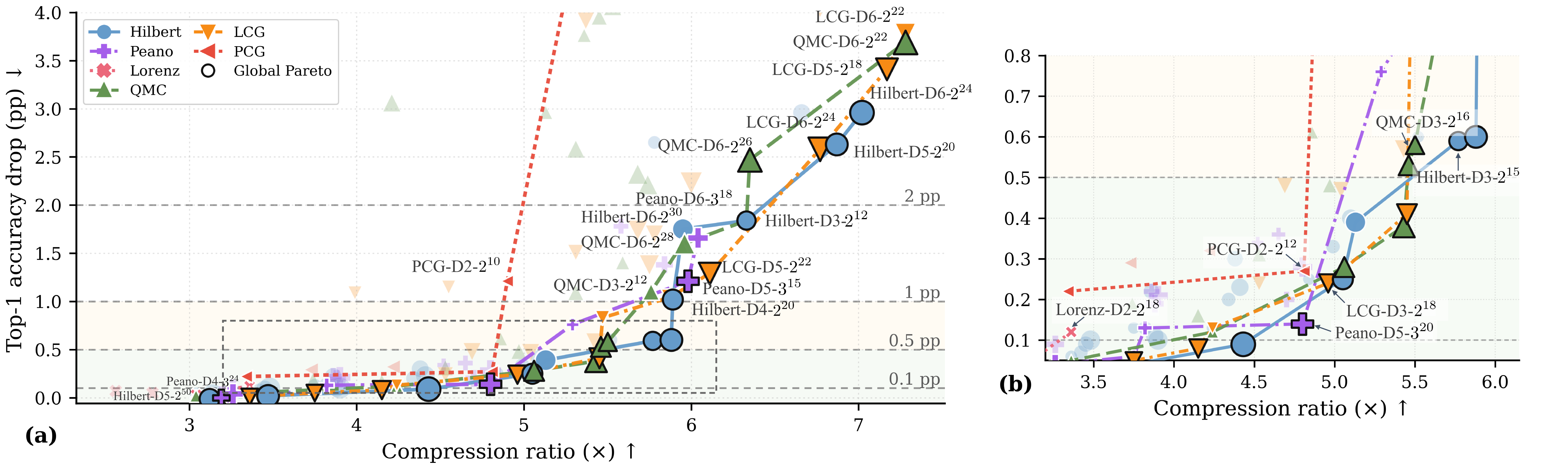}
\caption{
Compression--accuracy method-wise Pareto frontiers on ResNet-18/ImageNet.
The uncompressed model achieves \(69.76\%\) Top-1 accuracy.
For clarity, only configurations with a Top-1 accuracy drop of at most \(4.0\) percentage points are shown in panel (a).
Panel (b) enlarges the high-fidelity region indicated by the dashed rectangle in panel (a).
}
\label{fig:pareto_analysis}
\vspace{-0.5cm}
\end{figure*}

\section{Analysis Experiments}
\label{sec:analysis_experiments}

\begin{figure}[!t]
\centering
\includegraphics[width=\linewidth]{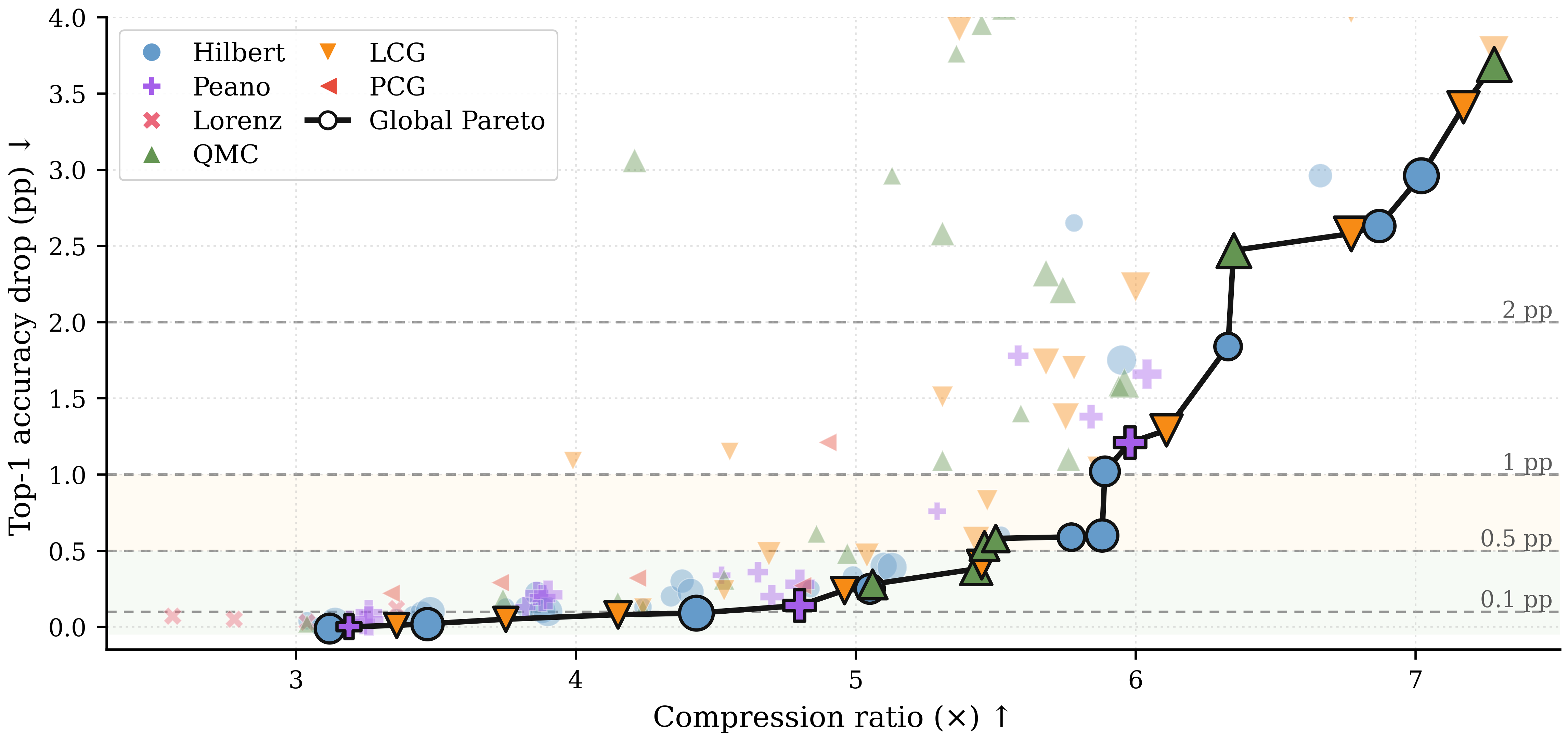}
\caption{
Compression--accuracy global Pareto analysis on ResNet-18/ImageNet.
The uncompressed model achieves \(69.76\%\) Top-1 accuracy.
For clarity, configurations with a Top-1 accuracy drop greater than \(4.0\) percentage points are omitted.
}
\label{fig:pareto_global}
\vspace{-0.5cm}
\end{figure}

In this section, we supplement the global and method-wise Pareto analysis to support the representative configurations.

\textbf{Pareto Analysis}.
\label{subsec:pareto_analysis}
We jointly analyze the performance of the compression ratio and the accuracy drop using global and method-wise Pareto frontiers in Figure~\ref{fig:pareto_global} and Figure~\ref{fig:pareto_analysis}.
A hyper-parameter configuration is Pareto-optimal only if no other configuration achieves an equal or higher compression ratio and an equal or lower accuracy drop.

Figure~\ref{fig:pareto_global} presents the Pareto frontier of all methods with all hyper-parameter configurations.
Because the \texttt{Hilbert}, \texttt{Morton/Z-order}, and \texttt{Snake} curves produce identical results under matched configurations,
their Pareto frontiers coincide. 
We only annotate \texttt{Hilbert} to avoid visual clutter. No single method dominates the Pareto frontier in Figure~\ref{fig:pareto_global}, \textit{i.e.}, no single method is completely superior to the rest. However, in the high-fidelity and moderate-compression regions, the frontier is jointly supported by \texttt{Peano}, \texttt{LCG}, \texttt{Hilbert}, and \texttt{QMC}, and neither \texttt{Lorenz} nor \texttt{PCG} contributes a point to the global frontier. This means that in this region, \texttt{Peano}, \texttt{LCG}, \texttt{Hilbert}, and \texttt{QMC} are better.

Figure~\ref{fig:pareto_analysis} presents the Pareto frontier of each method under different hyper-parameter configurations, which  characterizes the performance of each method.
\texttt{Hilbert} and \texttt{Peano} exhibit relatively stable frontiers but occupy different regions because of their different resolution schedules. \texttt{QMC} and \texttt{LCG} present broader but less regular frontiers, and this frontier spans a large compression range. This non-monotonic character means that the optimal configuration of $D$ and $S$ may appear in different places. \texttt{PCG} provides a limited effective frontier, only \(D=2\) achieves a competitive compression--accuracy trade-off. \texttt{Lorenz} forms a narrow frontier with stable accuracy preservation but limited compression capacity.


As Figure~\ref{fig:pareto_analysis} shows, since a Pareto frontier does not define a unique optimum, we select one representative configuration for each method from its Pareto frontier near the knee region.
Specifically, we prioritize an approximate compression ratio range of \(5\times\) with a drop in Top-1 accuracy below \(0.6\%\).
However, \texttt{Lorenz} does not reach this compression region, we select its highest-compression configuration that maintains near-lossless accuracy. The representative configurations are reported in the main manuscript.
Excluding \texttt{Lorenz}, representative configurations achieve the compression ratio between \(4.80\)--\(5.77\times\), with the Top-1 accuracy drop between \(0.14\)--\(0.59\%\).
Lorenz achieves a \(3.36\times\) compression ratio with a \(0.12\%\) accuracy drop. 

\bibliographystyle{IEEEtran}
\clearpage
\onecolumn

\vfill